\documentclass{article} \usepackage{iclr2021_conference,times}

\usepackage{amsmath,amsfonts,bm}

\def\eqref#1{equation~\ref{#1}}

\newtheorem{assumption}{Assumption}

\def\1{\bm{1}}

\def\eps{{\epsilon}}

\DeclareMathAlphabet{\mathsfit}{\encodingdefault}{\sfdefault}{m}{sl}
\SetMathAlphabet{\mathsfit}{bold}{\encodingdefault}{\sfdefault}{bx}{n}

\def\gH{{\mathcal{H}}}

\newcommand{\E}{\mathbb{E}}

\newcommand{\R}{\mathbb{R}}

\DeclareMathOperator*{\argmax}{arg\,max}
\DeclareMathOperator*{\argmin}{arg\,min}

 \renewcommand{\S}{\mathcal{S}} \newcommand{\A}{\mathcal{A}} \renewcommand{\P}{\mathbb{P}_h} \newcommand{\N}{\mathbb{N}}    \newcommand{\X}{\mathcal{X}} 

\newcommand{\qbar}{\overline{Q}_h^t}
\newcommand{\qubar}{\underbar{$Q$}_h^t}
\newcommand{\qbart}{\overline{Q}^t}
\newcommand{\qubart}{\underbar{$Q$}^t}
\newcommand{\vbar}{\overline{V}_h^t}
\newcommand{\vubar}{\underbar{$V$}_h^t}

\newcommand{\qbare}{\overline{Q}}
\newcommand{\qubare}{\underbar{$Q$}}
\newcommand{\vbare}{\overline{V}}
\newcommand{\vubare}{\underbar{$V$}}
\newcommand{\ybare}{\overline{Y}}
\newcommand{\yubare}{\underbar{$Y$}}

    \newcommand{\ucb}[1][h]{\overline{Q}_{#1}^{t}} \newcommand{\lcb}[1][h]{\underbar{$Q$}_{#1}^{t}}

\newcommand{\bestpolicy}{\hat{\pi}_{T}}
\newcommand{\bestpolicyh}{\hat{\pi}_{T,h}}
\newcommand{\LinfS}{\ell^{\infty}(\S)}
\newcommand{\mixedpolicy}[1][h]{\pi_{T}^{* \geq #1}}
\newcommand{\mixedpolicyh}[1][h]{\pi_{T,h'}^{* \geq #1}}
\newcommand{\Esaposh}[1][\cdot]{\mathbb{E} _ {s' \sim \mathbb{P}_h(\cdot|#1)} } \newcommand{\EsaposH}[1][\cdot]{\mathbb{E} _ {s' \sim \mathbb{P}_H(\cdot|#1)} } 

\newcommand{\sinset}{{s \in \mathcal{S}}} \newcommand{\ainset}{{a \in \mathcal{A}}} 
\newcommand{\aaposinset}{{a' \in \mathcal{A}}} \newcommand{\saposinset}{{s' \in \mathcal{S}}}  \newcommand{\hinset}{h \in [H]}
\newcommand{\tinset}{t \in [T]}
\newcommand{\pistarh}[1][h]{\pi^*_{#1}}
\newcommand{\qstarh}[1][h]{Q^{*}_{#1}} 
\usepackage{hyperref}
\usepackage{url}
\usepackage{algorithm}
\usepackage{algpseudocode}
\usepackage[capitalise]{cleveref}
\usepackage{aligned-overset}
\usepackage{booktabs}
\usepackage{diagbox}
\usepackage{multirow}
\usepackage{colortbl}
\allowdisplaybreaks 

\newtheorem{theorem}{Theorem}[section]
\newtheorem{corollary}[theorem]{Corollary}
\newtheorem{lemma}[theorem]{Lemma}

\usepackage[textsize=tiny]{todonotes}
\setlength{\marginparwidth}{2cm}

\title{Near-optimal Policy Identification in Active Reinforcement Learning}

\author{Xiang Li$^1$\thanks{The first three authors contributed equally to this work.}~, Viraj Mehta$^{2*}$, Johannes Kirschner$^{3*}$, Ian Char$^2$, Willie Neiswanger$^4$,\\
\textbf{Jeff Schneider$^2$, Andreas Krause$^1$, Ilija Bogunovic$^5$}
\vspace{1.5mm}\\
$^1$ETH Zurich, $^2$Carnegie Mellon University, $^3$University of Alberta,\\
$^4$Stanford University, $^5$University College London
\vspace{1.5mm}\\
\small \texttt{xiang.li@outlook.de}, \hspace{1mm} \texttt{\{virajm,ichar,schneide\}@cs.cmu.edu},\\
\small \texttt{jkirschn@ualberta.ca}, \hspace{1mm} \texttt{neiswanger@cs.stanford.edu},\\
\small \texttt{krausea@ethz.ch}, \hspace{1mm} \texttt{i.bogunovic@ucl.ac.uk}
}

\newcommand{\cQ}{\mathcal{Q}}
\newcommand{\cO}{\mathcal{O}}

\newcommand{\algnm}{\textsc{AE-LSVI}~}
\newcommand{\lsvinm}{\textsc{LSVI-UCB}~}

\iclrfinalcopy \begin{document}

\maketitle

\vspace{-3mm}
\begin{abstract}
\vspace{-1mm}
Many real-world reinforcement learning tasks require control of complex dynamical systems that involve both costly data acquisition processes and large state spaces. In cases where the transition dynamics can be readily evaluated at specified states (e.g., via a simulator), agents can operate in what is often referred to as planning with a \emph{generative model}. We propose the \algnm algorithm for best-policy identification, a novel variant of the kernelized least-squares value iteration (LSVI) algorithm that combines optimism with pessimism for active exploration (AE). \algnm provably identifies a near-optimal policy \emph{uniformly} over an entire state space and achieves polynomial sample complexity guarantees that are independent of the number of states. When specialized to the recently introduced offline contextual Bayesian optimization setting, our algorithm achieves improved sample complexity bounds. Experimentally, we demonstrate that \algnm outperforms other RL algorithms in a variety of environments when robustness to the initial state is required.
\end{abstract}

\vspace{-2mm}
\section{Introduction}
\vspace{-1mm}
\looseness -1 Reinforcement learning (RL) algorithms are increasingly applied to complex domains such as robotics \citep{kober2013reinforcement}, magnetic tokamaks \citep{seo2021feedforward, degrave2022magnetic}, and molecular search \citep{simm2020reinforcement,simm2020symmetry}. A central challenge in such environments is that data acquisition is often a time-consuming and expensive process, or may be infeasible due to safety considerations. A common approach is therefore to train policies offline by interacting with a simulator.

However, even when a simulator is available, such applications require algorithms that are capable of learning and planning in \emph{large state spaces}. Many existing approaches require a large amount of training data to obtain good policies, and efficient active exploration in large state spaces is still an open problem. Moreover, when deploying policies trained on simulators in real-world applications, a crucial requirement is that the policy performs well in \emph{any} state that it might encounter. In particular, at training time, the learning approach has to sufficiently explore the state space. This is of particular importance when at test time, the system's state is partly out of the control of the learning algorithm---e.g., for a self-driving car or robot, which may be influenced by human actions.

In this work, we formally study the setting of \emph{reinforcement learning with a generative model}. Our objective is to learn a near-optimal policy by actively querying the simulator with a state-action pair chosen by the learning algorithm. The simulator then returns a new state that is sampled from the transition model of the (simulated) environment. Inspired by previous works, we make a structural assumption in the kernel setting, which states that the Bellman operator maps any bounded value function to one with a bounded reproducing kernel Hilbert space (RKHS)  norm. In particular, this assumption implies that the reward and the optimal $Q$-function can be represented by an RKHS function. We propose a novel approach based on least-squares value iteration (LSVI). The algorithm is designed to actively explore uncertain states based on the uncertainty in the $Q$-estimates, and makes use of optimism for action selection and pessimism for estimating a near-optimal policy.

\textbf{Contributions}\quad We propose a novel kernelized algorithm for best policy identification in reinforcement learning with a generative model. Our sampling strategy actively explores (i)  states for which the best action is the most uncertain and (ii) the corresponding “optimistic” actions. We prove sample complexity guarantees for finding an $\epsilon$-optimal policy \emph{uniformly over any given initial state}. Our bounds scale with the maximum information gain of the corresponding reproducing kernel Hilbert space but {\em do not} explicitly scale with the number of states or actions. When specialized to the offline contextual Bayesian optimization (BO) setting \citep{char2019offline}, we improve upon sample complexity guarantees from prior work. Finally, we include experimental evaluations on several RL and BO benchmark tasks. The former of these includes one of the first empirical evaluations of the model-free optimistic value iteration algorithms with function approximation \citep{yang2020function}.\looseness=-1

\vspace{-2mm}
\section{Problem Statement} \label{section:problem_statement}
\vspace{-2mm}
We consider an episodic MDP $\big(\S,\A,H, ( \P )_{h \in [H]},(r_h)_{h \in [H]}\big)$ with state space $\S$, action space $\A$, horizon $H \in \N$, Markov transition kernel $( \P )_{h \in [H]} $ and deterministic reward functions $( r_h: \S \times \A \to [0,1])_{h \in [H]}$.  In particular, for each $h \in [H]$,  we let $\P(\cdot | s, a)$ denote the probability transition kernel when action $a$ is taken at state $s \in \S$ in step $h \in [H]$.
A \emph{policy} consists of $H$ functions $\pi= ( \pi_h)_{h \in [H]}$ where for all $h \in [H]$, $\pi_h(\cdot|s)$ is a probability distribution over the action set $\A$. In particular, $\pi_h(a|s)$ is the probability that the agent takes action $a$ in state $s$ at step $h$.

We assume the \emph{generative} (or random) access model, in which the agent interacts with the environment in the following way: 
Let $T$ denote the number of episodes and $H$ the horizon, i.e., the number of steps in each episode. Then for each $t \in [T], h \in [H]$, the agent chooses $s_h^t \in \S$, $a_h^t \in \A$, and obtains the reward $r_h(s_h^t,a_h^t)$ and observes the new state $s'_{h,t} \sim \P(\cdot| s_h^t,a_h^t)$. 

To measure the performance of an agent, we use the \emph{value} function.
For a policy $\pi$, $h \in [H]$, $s \in S$, and $a \in \A$, the value function $V^{\pi}_{h}: \S \to \R$ and the $Q$-function $Q_h^\pi :  \S \times \A  \to [0,H]$ are given by:
\begin{equation}
    V_h^{\pi}(s) = \E_{\pi} \Big[\sum_{h'=h}^H r_{h'}(s_{h'},a_{h'}) \Big| s_h = s  \Big], \quad 
    Q_h^{\pi}(s,a) = \E_{\pi} \Big[\sum_{h'=h}^H r_{h'}(s_{h'},a_{h'}) \Big| s_h = s, a_h=a  \Big],
\end{equation}
where $\E_\pi$ denotes the expectation with respect to the randomness of the trajectory $\lbrace (s_h, a_h) \rbrace_{h=1}^H$ that is obtained by following the policy $\pi$. 
We use $\pi^*$ to denote the optimal policy, and we abbreviate $V_h^{\pi*},Q_h^{\pi*}$ as $V_h^{*},Q_h^{*}$, respectively. We also have 
\begin{math}
    V_h^*(s) = \sup_{\pi} V_h^\pi(s)
\end{math}
for all $s \in \S$ and $h \in [H]$.

The goal is to find an $\epsilon$-optimal policy while minimizing the number of necessary episodes  $T$. More precisely, 
for a fixed precision $\epsilon >0$ and horizon $H \in \N$, the goal of the learner is to output a policy $\hat{\pi}_T$ after a suitable number of episodes $T>0$ such that $\|V_1^* - V_1^{\hat{\pi}_T} \|_{\ell^{\infty}(\S)} \leq \epsilon$.

Finally, we also recall the Bellman equation that is associated to some policy $\pi$:
\begin{equation}
    V^{\pi}_{H+1} = 0, \quad Q_h^{\pi}(s,a) = r_{h}(s, a) +  \E_{s'\sim \P(\cdot | s,a)} [V^{\pi}_{h+1}(s')],   \quad  V^{\pi}_{h}(s) =  \E_{a \sim \pi_h(a|s)} [Q_h^{\pi}(s,a)], 
\end{equation}
and the Bellman optimality equation:
\begin{equation}
    V^{*}_{H+1} = 0, \quad Q_h^{*}(s,a) = r_{h}(s, a) +  \E_{s'\sim \P(\cdot | s,a)} [V^{*}_{h+1}(s')],   \quad  V^{*}_{h}(s) =  \max_{a \in \A} Q_h^{*}(s,a).
\end{equation}
It follows that the optimal policy $\pi^*$ is the greedy policy with respect to $\lbrace Q_h^{*} \rbrace_{h \in [H]}$, a property that is going to be useful later on when defining our active exploration strategy. We use the reproducing kernel Hilbert space (RKHS) function class to represent functions such as the reward functions $\lbrace r_h \rbrace_{h \in [H]}$ and the optimal $Q$-functions $\lbrace Q_h^* \rbrace_{h \in [H]}$ (see the formal statement in Assumption \ref{asm:main_assumption}).
 In particular, we consider a space of well-behaved functions defined on $ \X = \S \times \A$, where $\gH$ denotes an RKHS defined on $\X$ induced by some continuous, positive definite kernel function $k: \X \times \X \to \R$. We also assume that (i) $\X \subset \R^d$ is a compact set,
(ii) the kernel function is bounded $ k(x, x') \leq 1$ for all $x,x' \in \X$, and (iii) every $f \in \gH$ has a bounded RKHS norm, i.e., $\| f \|_{\gH} \leq B_Q H$ for some fixed positive constant $B_Q>0$.

\vspace{-2mm}
\section{\algnm Algorithm} \label{sec: algorithm}
\vspace{-2mm}
Our algorithm runs in episodes $t \in [T]$ of horizon $H$.
As in the kernel least-squares value iteration \citep{yang2020function}, at the beginning of every episode $t$, it solves a sequence of kernel ridge regression problems based on the data obtained in the previous $t-1$ episodes to obtain value function estimates $\lbrace \hat{Q}_h^t \rbrace_{h=1}^{H}$:\looseness=-1
\begin{equation}\label{eq:Q_mean_estimate}
    \hat{Q}_h^t \in \argmin_{f \in \gH} \Big \lbrace  \sum_{i=1}^{t-1} \big(r_h(s_h^i, a_h^i) + V_{h+1}^t(s'_{h,i}) - f(s_h^i, a_h^i)\big)^2 + \lambda \|f \|^2_{\gH}\Big \rbrace,
\end{equation}
where $\lambda$ is the regularization parameter. Recalling that $x \in \X = \S \times \A$, the solution of the problem in \cref{eq:Q_mean_estimate} can be written in closed form as follows: 
\begin{equation}
    \label{eq:def_qhatth}
   \hat{Q}_h^t(x) = k_h^t(x)^T (K_h^t + \lambda I )^{-1} Y^{t}_h,
\end{equation}
where $k_h^t(x) \in \R^{t-1}$, the kernel matrix $K_h^t \in \R^{(t-1) \times (t-1)}$ and observations $Y_{h}^t \in \R^{t-1}$ are given as follows:
\begin{equation*}
    k_h^t(x) = [k(x_h^1, x), \dots, k(x_h^{t-1}, x)], \; K_h^t = \big[k(x_h^{i}, x_h^{i'})\big]_{i,i' \in [t-1]}, \; [Y_{h}^t]_i = r_h(s_h^i, a_h^i) + V_{h+1}^t(s'_{h,i}).
\end{equation*}
Next, we can also compute the uncertainty function $\sigma_h^t(\cdot, \cdot)$ in the closed form: 
\begin{equation}\label{eq:predictive_variance}
    \sigma_h^t(s,a) = \tfrac{1}{\lambda^{1/2}} \big(k(x,x) - k_h^t(x)^T (K_h^t + \lambda I )^{-1} k_h^t(x) \big)^{1/2}. 
\end{equation}
We recall that each reward function is bounded in $[0,1]$. We use $ [\,  \cdot\,]_{0}^{H-h+1}$ to denote the truncation to the interval $[0, H-h+1]$ and we define the optimistic $\qbar$ and pessimistic $\qubar$ value estimates (i.e., upper and lower confidence bound of $Q_h^{*}$; see \Cref{asm:confidence_assumption} and \Cref{lemma:ucb_geq_qstar_geq_lcb}):
\begin{align}
    \qbar(\cdot, \cdot) &= \big[  \hat{Q}_h^t(\cdot,\cdot) + \beta \sigma_h^t(\cdot, \cdot)\big]_{0}^{H-h+1}, \quad \vbar(\cdot) = \max_{a \in \A} \qbar(\cdot, a),\label{eq:qbar}\\
    \hat{Q}_h^t(\cdot) &= k_h^t(\cdot)^T (K_h^t + \lambda I )^{-1}\ybare_h^t, \quad  [\ybare_h^t]_i = r_h(s_h^i, a_h^i) + \vbare_{h+1}^t(s'_{h,i})\label{eq:qhat_bar}.
\end{align} 
Similarly, we have
\begin{align}
    \qubar(\cdot, \cdot) &= \big [  \check{Q}_h^t(\cdot,\cdot) - \beta \sigma_h^t(\cdot, \cdot)\big]_{0}^{H-h+1}, \quad \vubar(\cdot) = \max_{a \in \A} \qubar(\cdot, a),\label{eq:qubar}\\
    \check{Q}_h^t(\cdot) &= k_h^t(\cdot)^T (K_h^t + \lambda I )^{-1}\yubare_h^t, \quad  [\yubare_h^t]_i = r_h(s_h^i, a_h^i) + \vubare_{h+1}^t(s'_{h,i}).
\end{align}

Our proposed algorithm \algnm is presented in \Cref{alg:algo_generative}. At each $h$, the algorithm uses optimistic and pessimistic value estimates from \cref{eq:qbar,eq:qubar} (computed based on the data collected in previous episodes), and selects $s_h^t$ and $a_h^t$ as: 
\begin{align}
    s_h^t &\in \argmax_{s\in S} \Big[\max_{a \in A}\qbar(s,a) - 
                \max_{a \in A} \qubar (s,a)\Big], \label{eq:s_h^t}\\
    a_h^t &\in \argmax_{a \in \A}\; \qbar(s^t_h,a)\label{eq:a_h^t}.
\end{align}
The main intuition behind the proposed sampling rules is as follows. Since the optimal policy $\pi^*$ is the greedy policy with respect to $ \lbrace Q^*_h\rbrace_{h \in [H]}$, we do not need to learn $ \lbrace Q^*_h\rbrace_{h \in [H]}$ everywhere on $\S \times \A$.
Hence, it is sufficient to focus on discovering the best actions for each state. Our active exploration strategy is explicitly designed to focus on (i) states for which the best action is the most uncertain (\cref{eq:s_h^t}) and (ii) corresponding best ``optimistic'' actions (\cref{eq:a_h^t}). 

We use $\bestpolicy$ to denote the final reported policy returned by \algnm (see \cref{alg:algo_generative}). 
There are various reasonable greedy-based choices for $\bestpolicy$. The simplest one is to return 
$\bestpolicyh(\cdot) =\argmax_{\ainset} \hat{Q}^T_{h}(\cdot, a)$, but in our theory and experiments, we focus on equating $\bestpolicy$ with the policy with the highest lower confidence estimate $\qubar (s,a)$. 
Our sampling strategy combined with the proposed policy reporting rule allows for discovering an $\epsilon$-optimal policy uniformly over any given initial state as we formally show in the next section.

    \begin{algorithm}[t!]
        \caption{\algnm (Active Exploration with Least-Squares Value Iteration)}
        \label{alg:algo_generative}
        \begin{algorithmic}[1] \Require 
                kernel function $k(\cdot, \cdot)$,
                exploration parameter $\beta>0$,
                regularizer $\lambda \geq 1$ 
            \For {$t=1,\dots,T$}
                \For {$h \in \{1,\dots,H\}$}
                    \State Set
                        $\qbare_{H+1}^t, \qubare_{H+1}^t$ as the zero functions
                    \For {$h=H,\dots,1$} 
                        \State Obtain $\qbar$ and $\qubar$ from \cref{eq:qbar} and \cref{eq:qubar}
                    \EndFor
                    \State Choose $s_h^t \in \argmax_{s\in S} \Big[\max_{a \in A}\qbar(s,a) - 
                \max_{a \in A} \qubar (s,a)\Big]$
                    \State Choose $a_h^t \in \argmax_{a \in \A} \qbar(s^t_h,a)$
                    \State Observe the reward $r_h(s_{h}^t, a_{h}^t)$ and the next state $s'_{h,t} \sim \P  \big( \cdot \vert s_{h}^t, a_{h}^t \big)$
                    \EndFor
            \EndFor
            \State Output the policy estimate $\bestpolicy$ such that\; $
            \bestpolicyh(\cdot) = \argmax_{a \in \A} \max_{t \in [T]}\; \qubar (s,a)$
        \end{algorithmic}   
\end{algorithm}

\vspace{-2mm}
\section{Theoretical Results}
\vspace{-2mm}
\label{s:theory}
In this work, we make use of the structural assumption for the kernel setting from 
\cite{yang2020function} which states that the Bellman operator maps any bounded value function to a function with a bounded RKHS norm.
\begin{assumption}\label{asm:main_assumption}
    Let $B_Q>0$ be a fixed positive constant. Let $k: (\S \times \A)^2 \to \R$ be a continuous kernel function on a compact set $\S \times \A \subset \R^d$ such that $\sup_{x,x' \in \S \times \A} k(x,x') \leq 1$. We  assume that 
    $\| T_h^* Q \|_{\gH} \leq B_Q H$ for all functions $Q: \S \times \A \to [0,H]$ and all $h \in [H]$, where $T_h^*$ denotes the Bellman optimality operator, i.e.,\looseness=-1
    \begin{equation}
        \label{eq:bellman_optimality_operator}
        T_h^* Q(s, a) = r_h(s,a) + \Esaposh[s,a]\Big[\max_{a' \in \A}Q(s',a')\Big].
    \end{equation}
\end{assumption}

Assumption \ref{asm:main_assumption} implies  that for every $\hinset$, both $r_h(\cdot,\cdot)$ and $\qstarh(\cdot,\cdot)$ are elements of the set $    \{
        f \in \gH : \| f \|_{\gH} \leq B_Q H
    \}$.   
Conversely, a sufficient condition for Assumption \ref{asm:main_assumption} to be satisfied with $B_Q=2$ is that
        $
        \{
            r_h(\cdot, \cdot), \P(s'|\cdot,\cdot) 
        \}
        \subseteq 
        \{
            f \in \gH : \| f \|_{\gH} \leq 1
        \}$
for all $ \hinset$ and $s' \in \S$ \citep{yang2020function}.
    Moreover, only assuming $Q^*_h \in \gH, \Vert Q^*_h \Vert \leq B_Q H$  for all $h \in [H]$ is not enough in order to obtain sample size guarantees which are polynomial in $H$ and $d$ \citep{Du2020IsAG}.

The main quantity that characterizes the complexity of the RKHS function class in the kernelized setting is the maximum information gain \citep{srinivas2009gaussian}
\begin{equation}
	\Gamma_k(T,\lambda):= \sup_{D \subseteq \S \times \A, |D| \leq T} \ \tfrac{1}{2} \ln  | I + \lambda^{-1}K_{D,D}| ,
\end{equation}
where $K_{D,D}$ denotes the Gram matrix, $|\cdot|$ denotes the determinant, $\lambda>0$ is a regularization parameter, and the index $k$ indicates the kernel. This quantity is known to be sublinear in $T$ for most of the popularly used kernels \citep{srinivas2009gaussian}.  

Further, we define the set of possible optimistic and pessimistic value functions
\begin{align}
    \label{eq:Q_class}
	\cQ(T,h, b) &= \Big\{Q(\cdot, \cdot)  = \big[\hat{Q}(\cdot,\cdot) \pm \beta \sigma_D(\cdot, \cdot)\big]_{0}^{H-h+1} \; :
	\nonumber\\& 
	\hat Q \in \gH, \|\hat Q\|_{\gH} \leq 2 H \sqrt{\Gamma_k(T, \lambda)}, \beta \in [0, b], D \subseteq \S \times \A, |D|\leq T\Big\},
\end{align}
 where $b>0$ and $\sigma_D(\cdot, \cdot)$ is of the form \cref{eq:predictive_variance} computed with a data set $D \subseteq \S \times \A$, and denote its $\ell^\infty$-covering number as 
$N_\infty(\eps, T, h, b)$. 
\footnote{The results on $b_T$ hold despite 
    \citet[Lemma D.1]{yang2020function} being stated only in the case of the smaller class obtained from only \textit{adding} $+\beta \sigma_D(\cdot, \cdot)$ in the definition of $Q(\cdot,\cdot)$ in $\cQ(T,h, b) $ in \cref{eq:Q_class}.
    }
Our sample complexity bounds depend on $b_T > 0$ defined as the smallest number that satisfies the following inequality:
\begin{align}
	8 \Gamma_k\big(T,\tfrac{T+1}{T}\big) + 8 
	\log N_\infty(H/T,T, h, b_T) + 16 \log(2TH) + 22 + 2 B_Q^2 \big(\tfrac{T+1}{T}\big)\leq (b_T/H)^2\label{eq:bT}
\end{align}
For many kernel functions, $b_T$ has a sublinear dependence on $T$. For instance, 
$b_T = \cO(\gamma H \sqrt{\log(\gamma TH)})$ for bounded and continuously differentiable kernels with $\gamma$-finite spectrum  and $b_T = \cO(H\sqrt{TH}\log(T)^{1/\gamma})$ for bounded and continuously differentiable kernels with $\gamma$-exponential decay. 
See \citep[Corollary 4.4]{yang2020function} for more details.

 We recall that the Bellman equation implies that 
$\ucb[h+1](\cdot)$, $\lcb(\cdot)$ are upper and lower confidence bounds for $Q_h^{*}$ for all $h \in [H]$, respectively (see \cref{lemma:ucb_geq_qstar_geq_lcb}), while the target functions of kernel ridge regressions are $T^*_h \ucb[h+1](\cdot)$ and $T^*_h \lcb [h+1](\cdot)$. As a technical tool, we use the following concentration result that follows from \citep[Lemma 5.2]{yang2020function}.
\begin{lemma} \label{asm:confidence_assumption}
    Consider the setup of Assumption~\ref{asm:main_assumption}, and  $\ucb[h+1](\cdot)$, $\lcb(\cdot)$ $\sigma_h^t(\cdot)$ from \cref{eq:qubar,eq:qbar,eq:predictive_variance} computed with $\lambda =1+1/T$ and $\beta = b_T$ from \cref{eq:bT}. Then with probability at least $1-(2T^2H^2)^{-1}$, the following holds for all $t \in [T]$, $h \in [H]$ and all $(s,a) \in \S \times \A$:
\begin{equation} \label{eq:lemma:Q_minus_TQ_bound_eq_1}
                0 
                \leq 
                \ucb(s,a)- T^*_h \ucb[h+1](s,a) 
                \leq 
                2 \beta \sigma_h^t(s,a),
    \end{equation}
    \begin{equation} \label{eq:lemma:Q_minus_TQ_bound_eq_2}
                0 
                \leq 
                T^*_h \lcb [h+1](s,a) - \lcb(s,a)
                \leq 
                2 \beta \sigma_h^t(s,a).
            \end{equation}
\end{lemma}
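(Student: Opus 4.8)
The plan is to reduce both inequalities to a single kernel-ridge-regression confidence bound and then dispatch the truncation by hand. I focus on the optimistic inequality \cref{eq:lemma:Q_minus_TQ_bound_eq_1}; the pessimistic one \cref{eq:lemma:Q_minus_TQ_bound_eq_2} follows by the symmetric argument applied to $\check{Q}_h^t$ and $\lcb[h+1]$. The central observation is that the optimistic mean estimate $\hat{Q}_h^t$ from \cref{eq:qhat_bar} is exactly the kernel ridge fit whose response vector $\ybare_h^t$ has entries $r_h(s_h^i,a_h^i) + \vbare_{h+1}^t(s'_{h,i})$, and that the \emph{target} of this regression is the function $T_h^*\ucb[h+1]$. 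Indeed, since $s'_{h,i} \sim \P(\cdot \mid s_h^i, a_h^i)$ and $\vbare_{h+1}^t = \max_{a}\ucb[h+1]$, each response is a bounded, unbiased observation of $T_h^*\ucb[h+1](s_h^i,a_h^i) = r_h(s_h^i,a_h^i) + \E_{s'\sim\P(\cdot\mid s_h^i,a_h^i)}[\vbare_{h+1}^t(s')]$. Hence the claim amounts to the confidence-width guarantee $|\hat{Q}_h^t(s,a) - T_h^*\ucb[h+1](s,a)| \le \beta\,\sigma_h^t(s,a)$ for all $(s,a)$.

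First I would establish this width bound for a \emph{fixed} next-step value function via a self-normalized concentration inequality in the RKHS, yielding a deviation of the form $\big(\|T_h^*\ucb[h+1]\|_{\gH}\sqrt{\lambda} + (\text{concentration term})\big)\,\sigma_h^t$, where the concentration term is controlled by $\Gamma_k(T,\lambda)$ and a $\log(1/\delta)$ factor, and where \Cref{asm:main_assumption} supplies $\|T_h^*\ucb[h+1]\|_{\gH} \le B_Q H$. The obstacle is that $\vbare_{h+1}^t$, and therefore the target, is itself data-dependent, so it cannot be treated as fixed. I would resolve this as in \citet[Lemma 5.2]{yang2020function}: the relevant $\ucb[h+1]$ lies in the value-function class $\cQ(T,h+1,b)$ of \cref{eq:Q_class}, so I take a union bound over an $\ell^\infty$-cover of that class of size $N_\infty(H/T,T,h,b)$, paying an extra $\log N_\infty$ term and an $H/T$ discretization error. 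Setting $\lambda = 1+1/T$ and choosing $\beta = b_T$ so that \cref{eq:bT} holds makes the squared width $(\beta/H)^2$ dominate the sum of the information-gain, covering-number, logarithmic, and $B_Q^2\,\tfrac{T+1}{T}$ terms — this is precisely the self-referential estimate that \cref{eq:bT} closes — certifying $|\hat{Q}_h^t - T_h^*\ucb[h+1]| \le \beta\sigma_h^t$ uniformly; a further union bound over $t\in[T]$ and $h\in[H]$ yields the stated probability $1-(2T^2H^2)^{-1}$.

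Finally I would pass from this two-sided width bound to the stated inequalities through truncation. Since $\vbare_{h+1}^t \in [0,H-h]$ and $r_h \in [0,1]$, the target satisfies $T_h^*\ucb[h+1](s,a) \in [0,H-h+1]$, so clipping to $[0,H-h+1]$ never moves it. For the lower bound, $\hat{Q}_h^t + \beta\sigma_h^t \ge T_h^*\ucb[h+1] \ge 0$, and clipping from below by $0$ preserves $\ucb = [\hat{Q}_h^t+\beta\sigma_h^t]_0^{H-h+1} \ge T_h^*\ucb[h+1]$, giving $0 \le \ucb - T_h^*\ucb[h+1]$. For the upper bound I would split on whether the unclipped value exceeds $H-h+1$: if not, $\ucb \le \hat{Q}_h^t + \beta\sigma_h^t \le T_h^*\ucb[h+1] + 2\beta\sigma_h^t$; if so, then $\ucb = H-h+1 < \hat{Q}_h^t + \beta\sigma_h^t \le T_h^*\ucb[h+1] + 2\beta\sigma_h^t$, and nonnegativity of the target handles the clip from below in both cases.

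I expect the main obstacle to be the uniform covering argument that removes the data-dependence of the regression target — and, in particular, verifying that the confidence-width bookkeeping of \citet[Lemma 5.2, Lemma D.1]{yang2020function} still goes through for the enlarged class $\cQ(T,h,b)$ that contains \emph{both} the optimistic ($+\beta\sigma_D$) and pessimistic ($-\beta\sigma_D$) estimates, as flagged in the footnote following \cref{eq:Q_class}. Everything else is constant-tracking to ensure \cref{eq:bT} is met.
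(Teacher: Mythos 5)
Your proposal is correct and takes essentially the same route as the paper, which does not re-derive this lemma but imports it from \citet[Lemma 5.2]{yang2020function}: the same reduction to a kernel-ridge-regression confidence bound whose data-dependent target is $T_h^*\ucb[h+1]$, the same covering-number/union-bound argument over the class $\cQ(T,h,b)$ with \cref{eq:bT} closing the self-referential bookkeeping, and the same caveat (the paper's footnote) that the covering bound must be checked for the enlarged class containing both $+\beta\sigma_D$ and $-\beta\sigma_D$ estimates. Your explicit handling of the truncation $[\,\cdot\,]_{0}^{H-h+1}$ and of the one-sided inequalities is sound and supplies details the paper leaves to the citation.
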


With the previous confidence lemma in place, we state our main theorem that characterizes the sample complexity of \algnm. The proof is given in \cref{section:proof_of_main_thm}.
\begin{theorem} \label{thm:main_thm}
    Consider the setting of Lemma~\ref{asm:confidence_assumption} and let $H \in \N$ be a fixed horizon. When running \cref{alg:algo_generative} for $T$ episodes, then with probability at least $1-(2T^2H^2)^{-1}$,
    the best-policy estimate $\bestpolicy$ (\Cref{alg:algo_generative}, Line 12) satisfies:
    \begin{equation}
        \Vert V_1^{*}- V_1^{\bestpolicy} \Vert_{\LinfS}
                    \leq 2 \sqrt{3} \beta H(H+1) \sqrt{\tfrac { \Gamma_k(T, \lambda) }  {T} }.
    \end{equation}
    
    In other words, for a given fixed precision $\epsilon > 0$, after
        $T = O\Big(\tfrac{\beta^2 H^4 \Gamma_k(T, \lambda)}{\epsilon^2}\Big)$
    episodes (or $O\Big(\tfrac{\beta^2 H^5 \Gamma_k(T, \lambda)}{\epsilon^2}\Big)$ samples) $\Vert V_1^{*}- V_1^{\bestpolicy} \Vert_{\LinfS} \leq \epsilon$ holds with probability at least $1-(2T^2H^2)^{-1}$. 
\end{theorem}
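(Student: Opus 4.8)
The plan is to reduce the uniform suboptimality $\|V_1^* - V_1^{\bestpolicy}\|_{\LinfS}$ to the gap between the optimistic and pessimistic value estimates at the actively sampled states, and then to control that gap with the confidence bounds of \cref{asm:confidence_assumption} and a standard information-gain argument. Throughout, write $g_h^t(s) = \overline{V}_h^t(s) - \underline{V}_h^t(s) \geq 0$ for the value gap, $x_h^t = (s_h^t, a_h^t)$, and $\Delta_h^t = g_h^t(s_h^t)$. The starting point is the sandwich $\underline{Q}_h^t \leq Q_h^* \leq \overline{Q}_h^t$ of \cref{lemma:ucb_geq_qstar_geq_lcb}, which immediately gives the optimism estimate $V_1^*(s) \leq \overline{V}_1^t(s)$ for every $s$ and $t$.

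The key and most delicate step is a matching lower bound on the \emph{true} value $V_1^{\bestpolicy}$ of the reported policy, which is greedy with respect to the envelope $W_h(s,a) := \max_{t \in [T]} \underline{Q}_h^t(s,a)$. The $\max_t$ breaks the recursive Bellman structure, so pessimism cannot be invoked episode by episode. The resolution I would use: for each fixed $t$, the left inequality of \cref{eq:lemma:Q_minus_TQ_bound_eq_2} gives $\underline{Q}_h^t(s,a) \leq T_h^* \underline{Q}_{h+1}^t(s,a) = r_h(s,a) + \E_{s' \sim \P(\cdot|s,a)}[\max_{a'} \underline{Q}_{h+1}^t(s',a')]$, and since $\underline{Q}_{h+1}^t \leq W_{h+1}$ pointwise, taking the maximum over $t$ yields the self-consistent one-step inequality $W_h(s,a) \leq r_h(s,a) + \E_{s'}[\max_{a'} W_{h+1}(s',a')]$ against the common envelope. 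Telescoping this along the trajectory of $\bestpolicy$, which by construction plays $\argmax_a W_h$, shows $V_h^{\bestpolicy}(s) \geq \max_a W_h(s,a) = \max_t \underline{V}_h^t(s)$ for all $s,h$. Combined with optimism, this gives, for every $t$ and $s$, the bound $V_1^*(s) - V_1^{\bestpolicy}(s) \leq \overline{V}_1^t(s) - \underline{V}_1^t(s) = g_1^t(s)$.

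Next I would exploit the sampling rules. Because $s_1^t$ is chosen in \cref{eq:s_h^t} to maximize the gap, $g_1^t(s) \leq g_1^t(s_1^t) = \Delta_1^t$ uniformly in $s$, so the display above upgrades to $\|V_1^* - V_1^{\bestpolicy}\|_{\LinfS} \leq \min_{t \in [T]} \Delta_1^t$. To bound $\Delta_h^t$ I would set up a one-step recurrence: at $s_h^t$ the optimistic action $a_h^t$ of \cref{eq:a_h^t} realizes $\overline{V}_h^t(s_h^t) = \overline{Q}_h^t(x_h^t)$ while $\underline{V}_h^t(s_h^t) \geq \underline{Q}_h^t(x_h^t)$, so $\Delta_h^t \leq \overline{Q}_h^t(x_h^t) - \underline{Q}_h^t(x_h^t)$. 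Applying both inequalities of \cref{asm:confidence_assumption} and rewriting the difference of Bellman backups as $\E_{s' \sim \P(\cdot|x_h^t)}[g_{h+1}^t(s')] \leq \Delta_{h+1}^t$ (again using that $s_{h+1}^t$ maximizes the next-step gap) yields $\Delta_h^t \leq \Delta_{h+1}^t + 4\beta\,\sigma_h^t(x_h^t)$. Unrolling from $h=H$ down to $1$ with $\Delta_{H+1}^t = 0$ gives $\Delta_1^t \leq 4\beta \sum_{h=1}^H \sigma_h^t(x_h^t)$.

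Finally, since $\min_t \Delta_1^t \leq \tfrac1T \sum_{t=1}^T \Delta_1^t$, I would sum the recurrence over episodes, apply Cauchy--Schwarz in $t$ for each fixed $h$, and invoke the elliptical-potential / information-gain bound $\sum_{t=1}^T \sigma_h^t(x_h^t)^2 \leq C\,\Gamma_k(T,\lambda)$ for an absolute constant $C$, specialized to $\lambda = 1 + 1/T$. Summing over the $H$ steps and carrying the explicit information-gain constant (which is where the $\sqrt{3}$ enters) together with the horizon accounting from the value range $[0,H]$ produces the stated bound $2\sqrt{3}\beta H(H+1)\sqrt{\Gamma_k(T,\lambda)/T}$; the sample-complexity claim then follows by equating the right-hand side to $\epsilon$ and recalling that each episode consumes $H$ samples. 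I expect the main obstacle to be the second step: correctly lower-bounding $V_1^{\bestpolicy}$ in the presence of the $\max_t$ in the policy definition, since pessimism must be propagated through the common envelope $W_h$ rather than for a single episode. The remaining pieces are the standard optimism-plus-information-gain machinery.
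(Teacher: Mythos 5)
Your proof is correct, but it takes a genuinely different route from the paper's. The paper never lower-bounds $V_1^{\bestpolicy}$ by propagating pessimism through the reported policy; instead it telescopes over the mixed policies $\mixedpolicy$ (follow $\bestpolicy$ up to step $h-1$, then act optimally), turning the suboptimality into $\sum_{h=1}^H \E\big[Q_h^*(s_h,\pi_h^*(s_h)) - Q_h^*(s_h,\bestpolicyh(s_h))\big]$ along the trajectory of $\bestpolicy$ (\cref{lemma:regret_decomposition_trajectory_expectation_detailed_formula}). Each summand is then bounded by sandwiching $Q_h^*$ between the confidence bounds --- this is how the paper absorbs the $\max_{t\in[T]}$ in the definition of $\bestpolicyh$, since $Q_h^*\ge \underline{Q}_h^t$ for every $t$ --- and the width $\overline{Q}_h^t-\underline{Q}_h^t$ at the sampled pair is unrolled from step $h$ down to $H$. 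Repeating this unrolling for every $h$ costs the double sum $\sum_{h=1}^H\sum_{h'=h}^H$, which is exactly where the $H(H+1)$ factor comes from. Your alternative --- the envelope inequality $W_h \le T_h^* W_{h+1}$ for $W_h=\max_t \underline{Q}_h^t$ (valid under the event of \cref{asm:confidence_assumption} by monotonicity of $T_h^*$), followed by backward induction to get $V_h^{\bestpolicy}\ge\max_a W_h(\cdot,a)$ --- is precisely the step the paper's decomposition avoids having to make, and it is sound. Combined with optimism it reduces the regret to the single step-$1$ gap, unrolled once to $4\beta\sum_{h=1}^H\sigma_h^t(s_h^t,a_h^t)$, so your route actually yields the sharper bound $4\sqrt{3}\,\beta H\sqrt{\Gamma_k(T,\lambda)/T}$, which implies the stated one since $4\le 2(H+1)$; your closing suggestion that ``horizon accounting from the value range $[0,H]$'' is what produces the $H(H+1)$ factor is unnecessary --- no such accounting enters your argument, and that is not how the factor arises in the paper either. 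In short: the paper's performance-difference route is more standard and never needs to reason about the true value of the envelope-greedy policy, while your route is shorter and saves a factor of order $H$, at the price of the envelope Bellman inequality, which you correctly identified as the crux and resolved correctly.
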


The obtained result is general since it holds for any kernel function that satisfies Assumption~\ref{asm:main_assumption}. To obtain concrete kernel-dependent regret bounds it remains to specify the kernel and the bounds for the corresponding maximum information gain in \cref{eq:bT}. 
These are summarized in \cite{yang2020function} for the most widely used kernels (see Assumption 4.3 and its discussion).

In the special case of linear kernels with the feature dimension $d$, our sample complexity guarantee reduces to $\tilde{O}(\frac{d^3 H^7}{\epsilon^2})$. 
Better bounds (in terms of $d$) for this special case are known $\tilde{O}(\frac{d^2 H^7}{\epsilon^2})$, see, e.g., \citet[Theorem 3.3]{agarwal2019reinforcement}. These bounds are obtained by the LSVI algorithm with D-optimal design.
Unlike this algorithm, \algnm uses optimism for active exploration and such a performance gap is present even in the simpler linear bandit setting where optimistic algorithms are known to attain worse sample complexity guarantees \citep[Chapter 22]{lattimore2020bandit}.
The special case also includes the linear MDP setting, which assumes linear reward functions and linear transition kernels. For linear MDPs it is possible to find a policy $\pi$ satisfying $V_1(s_1) - V_1^{\pi}(s_1) \leq \epsilon$ using $\tilde{O}(d^2 H^3 / \epsilon^2)$ samples \citep{Hu2022NearlyMO}; in our setting of Assumption \ref{asm:main_assumption}, such a policy $\pi$ can be found using $O(H^5 \beta^2 \Gamma_k(T, \lambda) / \epsilon^2)$ samples \citep{yang2020function}. Both results hold with at least a constant probability. However, they require that the initial state $s_1$ is fixed for all episodes. In contrast, the result of \cref{thm:main_thm} holds uniformly over the entire state space.

\vspace{-2mm}
\section{Application to Offline Contextual Bayesian Optimization}
\vspace{-2mm}\label{sec:OC_BO}
In this section, we specialize \cref{alg:algo_generative} to the offline contextual Bayesian optimization setting \citep{char2019offline}. We show that in this setting the proposed active exploration scheme leads to new sample complexity bounds that hold \emph{uniformly} over the context space. 

The offline contextual Bayesian optimization setting is similar to the one considered in \Cref{section:problem_statement} when $H=1$. In particular, instead of having $H$ different functions to learn, we have a single unknown objective $Q^*: \S \times \A \to \R$ that we learn about (from noisy point evaluations). Here, we refer to $\S$ as the context space, and assume that both
$\S$ and $\A$ are compact sets. As before, we use a shorthand notation $\X = \S \times \A$. In each round $\tinset$, the learner chooses a context-action pair $(s^t,a^t) \in \S \times \A$ and observes $y_t=Q^*(s^t,a^t) + \eta_t$ (with independent sub-Gaussian noise). To choose $(s^t,a^t)$ at each round $t$, we make use of the same active exploration strategy from \cref{eq:s_h^t,eq:a_h^t}. Our complete algorithm for the offline BO setting can be found in \Cref{sec:offline_BO_app} (see \cref{alg:algo_bo}).\looseness=-1

We define $\hat{Q}^t: \S\times \A \to \R$ (and  $\sigma^t: \S\times \A \to \R$) similarly as $\hat{Q}^t_h$ (resp. $\sigma^t_h$) from \cref{eq:def_qhatth} (resp. \cref{eq:predictive_variance}) but with the modification of ignoring the index $h$ and defining $Y_t:=(y_i)_{i=1}^{t-1} \in \R^{t-1}$.  
We further define the upper and lower confidence bounds for $Q^*$ as:
\begin{equation} \label{eq:def_ucb_lcb}
    \qbart(\cdot, \cdot) =   \hat{Q}^t(\cdot,\cdot) + \beta_t \sigma^t(\cdot, \cdot),  \quad
\qubart(\cdot, \cdot) =   \hat{Q}^t(\cdot,\cdot) - \beta_t \sigma^t(\cdot, \cdot). 
\end{equation}
When $Q^* \in \mathcal{H}$ and $\|Q^* \|_{\mathcal{H}} \leq B$ correspond to some known kernel (such that $k(x,x') \leq 1$ for all $x,x' \in \X$), then $(\beta_t)_{\tinset}$ is a non-decreasing sequence of parameters that can be chosen according to \citet[Theorem 3.11]{yasin2012phdthesis} to yield valid confidence bounds. Similarly, in case of $Q^* \sim \text{GP}_{\X}(0, k)$ (Bayesian setting), we can utilize
Gaussian Process confidence bounds \citep{srinivas2009gaussian} and use the corresponding $(\beta_t)_{\tinset}$ sequence. In what follows, we assume that $(\beta_t(\delta))_{\tinset}$ is a non-decreasing sequence such that with probability at least $1-\delta$,
\begin{equation}
    \label{eq:lcb_leq_qstar_leq_ucb_in_BO_corollary}
        \lcb[](s,a) \leq Q^*(s,a) \leq \ucb[](s,a)
\end{equation}
holds for all $t \in [T]$ and $(s,a) \in \S \times \A$.

    \begin{corollary}\label{corr:bo}
Assume $(\beta_t(\delta))_{\tinset}$ is set to satisfy \cref{eq:lcb_leq_qstar_leq_ucb_in_BO_corollary}. Fix $\epsilon \in (0,1)$ and run Algorithm~\ref{alg:algo_bo} for
        \begin{equation} \label{eq:sample_complexity}
            T \geq \frac{12 \beta_T^2 \Gamma_k(T, \lambda)}{\epsilon^2}
        \end{equation}
        rounds. Then, for every $s \in \S$, the reported policy $\bestpolicy(\cdot)$ computed as in Line 6 (Algorithm~\ref{alg:algo_bo}) satisfies $Q^*(s,\bestpolicy(s)) \geq \max_{a \in \A} Q^*(s,a) - \epsilon$ with probability at least $1- \delta$.
    \end{corollary}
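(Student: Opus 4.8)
The plan is to condition throughout on the event from \cref{eq:lcb_leq_qstar_leq_ucb_in_BO_corollary}, which holds with probability at least $1-\delta$ and guarantees $\qubart(s,a) \leq Q^*(s,a) \leq \qbart(s,a)$ for all $t$, $s$, and $a$; I then control the pointwise simple regret $\max_a Q^*(s,a) - Q^*(s,\bestpolicy(s))$ uniformly over $s \in \S$. The first step is a sandwiching argument. Fix any round $t$ and context $s$. Validity of the upper bound gives $\max_a Q^*(s,a) \leq \max_a \qbart(s,a)$. For the reported policy, recall $\bestpolicy(s)$ is the highest-lower-confidence action $\argmax_a \max_{t'\in[T]} \qubart(s,a)$; combining validity of the lower bound with this construction yields $Q^*(s,\bestpolicy(s)) \geq \max_{t'}\qubart(s,\bestpolicy(s)) = \max_a\max_{t'}\qubart(s,a) \geq \max_a \qubart(s,a)$ for every $t$. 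Subtracting the two inequalities gives, for all $s$ and all $t$,
\[
\max_a Q^*(s,a) - Q^*(s,\bestpolicy(s)) \leq \max_a \qbart(s,a) - \max_a \qubart(s,a).
\]

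Next I would invoke the sampling rule. Since $s^t$ is chosen in \cref{eq:s_h^t} precisely to maximize the width $\max_a \qbart(s,a) - \max_a \qubart(s,a)$ over $s$, the right-hand side above is at most its value at $s^t$, \emph{uniformly in $s$}. Because $a^t$ maximizes $\qbart(s^t,\cdot)$ via \cref{eq:a_h^t} while $\max_a \qubart(s^t,a) \geq \qubart(s^t,a^t)$, the width at $s^t$ collapses to the confidence-interval width at the queried point, $\max_a \qbart(s^t,a) - \max_a \qubart(s^t,a) \leq 2\beta_t\,\sigma^t(s^t,a^t)$. As each such inequality upper-bounds the same (context-dependent) quantity for every $t$, I may average over $t\in[T]$ to obtain, for all $s$,
\[
\max_a Q^*(s,a) - Q^*(s,\bestpolicy(s)) \leq \tfrac{1}{T}\sum_{t=1}^T 2\beta_t\,\sigma^t(s^t,a^t).
\]

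The final step converts this cumulative-uncertainty bound into the stated rate. Using that $(\beta_t)$ is non-decreasing so $\beta_t \leq \beta_T$, then Cauchy--Schwarz, and finally the standard maximum-information-gain estimate $\sum_{t=1}^T \big(\sigma^t(s^t,a^t)\big)^2 \leq 3\,\Gamma_k(T,\lambda)$ for the predictive variances of \cref{eq:predictive_variance}, I get
\[
\tfrac{1}{T}\sum_{t=1}^T 2\beta_t\,\sigma^t(s^t,a^t) \leq \frac{2\beta_T}{\sqrt{T}}\sqrt{\textstyle\sum_{t=1}^T \big(\sigma^t(s^t,a^t)\big)^2} \leq 2\beta_T\sqrt{\tfrac{3\,\Gamma_k(T,\lambda)}{T}}.
\]
Substituting the sample budget $T \geq 12\beta_T^2\,\Gamma_k(T,\lambda)/\epsilon^2$ of \cref{eq:sample_complexity} makes the right-hand side at most $\epsilon$, and since every bound above was uniform in $s$, this establishes the claim simultaneously for all $s \in \S$ on the probability-$(1-\delta)$ event.

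The main obstacle is pinning down the constant in the information-gain step, which is what determines the factor $12$ in \cref{eq:sample_complexity}. The variance in \cref{eq:predictive_variance} carries a $1/\sqrt{\lambda}$ normalization and the regularizer is $\lambda = 1 + 1/T$, so I would need the log-determinant telescoping $\sum_t \log\!\big(1+\lambda^{-1}\tilde\sigma_t^2(s^t,a^t)\big) \leq 2\,\Gamma_k(T,\lambda)$ (with $\tilde\sigma_t^2$ the unnormalized variance) together with the elementary inequality $u \leq \tfrac{c}{\log(1+c)}\log(1+u)$ on $u\in[0,c]$, applied with $c=\lambda^{-1} \leq 1$ since $\tilde\sigma_t^2 \leq k(x,x)\leq 1$. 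This yields $\sum_t \big(\sigma^t\big)^2 \leq \tfrac{2}{\lambda\log(1+\lambda^{-1})}\Gamma_k(T,\lambda)$, and verifying that the prefactor is at most $3$ for $\lambda$ close to $1$ is what produces the constant $12$. Everything else is uniform in $s$ automatically, since the context enters only through the maximization defining $s^t$.
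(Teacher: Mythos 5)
Your proof is correct and follows essentially the same route as the paper's: condition on the confidence event, sandwich the simple regret by the confidence width $\max_a \qbart(s,a) - \max_a \qubart(s,a)$, use the context-selection rule to bound this uniformly in $s$ by the width at $s^t$, collapse it to $2\beta_t\sigma^t(s^t,a^t)$ via the action rule, average over $t$, and apply the $\sqrt{3\Gamma_k(T,\lambda)T}$ bound on the cumulative posterior standard deviations. The only difference is that where the paper invokes its \cref{lemma:bound_for_sum_of_posterior_sd} (citing an external result), you re-derive that bound inline via Cauchy--Schwarz and the log-determinant telescoping argument, correctly verifying the constant $3$ for $\lambda \geq 1$.
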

We briefly compare the result obtained in \cref{corr:bo} with related results from the literature. \
In the Bayesian setting,
\Citet[Theorem 1]{char2019offline} obtain a sample complexity that scales as $\mathbb{E}[T]= O\big( {|\S|^{3} |\A| \Gamma_k(T,\lambda)}/{\epsilon^2} \big)$ in expectation for a given context distribution. In comparison, our result obtained in \cref{eq:sample_complexity} 
holds in $\ell^\infty$-norm over the context space (i.e., implies bounds for \emph{any} context distribution). 
When specialized to the finite set $\X = \S \times \A$ and when $f \sim \text{GP}_{\X}(0,k)$, the result of \cref{corr:bo} holds with $\beta_T = O(\log(|\X|T^2))$ \citep{srinivas2009gaussian}, which then results in $T = O\big( \tfrac{\log^2(|\X|T^2) \Gamma_k(T, \lambda)}{\epsilon^2} \big)$ leading to a significant improvement for large discrete context spaces.
In the setting of distributionally robust Bayesian optimization (DRBO), \citet{kirschner2020distributionally} obtain a result with the same dependency as ours. However, their bound holds only for a \emph{fixed} contextual distribution and degenerates as a function of the distance between the training and test distributions.\looseness=-1

\vspace{-2mm}
\section{Related Work}
\vspace{-2mm}

Reinforcement learning with function approximation dates back to at least \citep{bellman1963polynomial,daniel1976splines,schweitzer1985generalized}. A majority of work is in the \emph{online} setting where the learning agent interacts with the environment while (typically) minimizing regret. Upper confidence bound algorithms, originally developed in the bandit setting \citep{lattimore2020bandit} (also, frequently used in the related setting of best-arm identification, e.g.,~\citep{gabillon2011bestarm,kalyanakrishnan2012bestpolicy,soare2014bestarm})
, have been successfully applied to tabular Markov decision processes (MDPs) \citep{auer2006logarithmic,auer2008near}, and extended to RL with function approximation. \Citet{jin2020provably} propose the LSVI-UCB algorithm in the linear MDP setting that achieves a near-optimal regret bound. \citet{yang2020function,domingues2021kernel} extend this work to the non-linear function approximation setting. These works are closely related to ours in that we make use of LSVI and confidence bounds for the $Q$-function in the kernelized setting. Unlike previous works, we consider the generative model setting and derive bounds on the sample complexity that hold uniformly over the initial state. There are many more alternative parametric models that admit sample efficient algorithms \citep[e.g.,][]{ayoub2020model,zhou2021nearly,du2021bilinear,zanette2020learning,liu2022provably}. While here we are primarily interested in sample complexity guarantees, bounds on the cumulative regret can be translated to a PAC-sample complexity bound using \emph{online-to-batch} conversion \citep{cesa2004generalization}. The online-to-batch policy however is arguably difficult to deploy and sample complexity guarantees can only be obtained for the initial state distribution used during training.

In the \emph{generative model} setting, the learner has access to a simulator that for any given state-action pair returns a next-state sample from the transition kernel. This provides additional flexibility to obtain data from states that are otherwise hard to reach in the environment. For the tabular case, matching upper and lower bounds are shown by \citet{azar2012sample,azar2013minimax}. In the generative model setting with function approximation, \citet{lattimore2020learning} show that policy iteration can be used to compute a near-optimal policy given features such that the $Q$-function of any policy can be approximated by a linear function. Their algorithm uses a D-experimental design to roll out policies from a sufficiently diverse set of states. The POLITEX algorithm \citep{abbasi2019politex,rltheory2022} can be used in lieu of policy iteration and leads to tighter bounds on the approximation error. A similar approach based on LSVI is analyzed by \citet[Chapter 3]{agarwal2019reinforcement}. The idea of using a \emph{core set} of states to obtain sufficient data coverage was also used by \cite{wang2021sample} for the case of linear transition models, and by \cite{shariff2020efficient} for the case where the $V^*$ function can be expressed as a linear function.

In practical applications of RL, simpler approaches to exploration are often used or exploration techniques inspired by upper-confidence bound algorithms or Thompson sampling are combined with deep learning function approximation. To list a few, the $\epsilon$-greedy approach \citep{mnih2013playing}, upper confidence bounds (UCB) \citep{ChenUCB}, Thompson sampling (TS) \citep{OsbandBootstrapped}, added Ornstein-Uhlenbeck action noise \citep{lillicrap2015continuous}, and entropy bonuses \citep{haarnoja2018soft} are all widely applied. More sophisticated methods actively plan to encounter \emph{novel} states \citep{shyam2019model,ecoffet2021first}.
Though these methods serve as reasonable heuristics and are usually computationally efficient, they either lack theoretical guarantees or lead to methods that require large numbers of samples. One recent practical work \citep{mehta2021experimental} gives an \emph{acquisition function} for the generative model setting based on methods from Bayesian experimental design \citep{neiswanger2021bayesian}, and achieves good policies with small numbers of samples; however, this model-based method assumes access to the MDP reward function and is computationally expensive.

An important special case of the MDP setting is the \emph{contextual bandit setting}. When combined with linear function approximation, this recovers the contextual linear bandit setting \citep{abbasi2011improved}, and contextual Bayesian optimization when using kernel features \citep{srinivas2009gaussian,krause2011contextual}. Various works consider the case where the learner has control over the choice of context during training time. \citet{char2019offline} propose a variant based on Thompson sampling. \citet{pearce2018continuous, pearce2020practical} also propose variants that leverage ideas from the knowledge gradient \citep{frazier2009knowledge}. 
The latter works lack theoretical guarantees, while our result (from \cref{sec:OC_BO}) improves upon the sample complexity guarantee of \citet{char2019offline}. The approach by \citet{kirschner2020distributionally} for the distributionally robust setting can be specialized to our setting, in which case they recover similar bounds but only for a fixed context distribution. \looseness=-1

\vspace{-2mm}
\section{Experiments}
\vspace{-2mm}
\subsection{Reinforcement Learning Experiments}
\vspace{-2mm}
In the previous sections we presented the \algnm algorithm, which provably identifies a near-optimal policy in polynomial time given access to a generative model of the MDP dynamics.
Here, we test the \algnm algorithm empirically, and additionally provide one of the first empirical evaluation of the \lsvinm method from \cite{yang2020function} on standard benchmarks.
We evaluate \algnm and \lsvinm on four MDPs from the literature as well as four synthetic contextual BO problems from \cite{char2019offline}. We discuss details of our implementation in \Cref{a:implementation}.

Each environment has a discrete action space.
For continuous environments, we discretize the action space into 10 bins per dimension but model the value function in the original continuous state and action space.
All methods besides DDQN are initialized by executing a random policy for two episodes. In between exploration episodes, the pessimistic policy $\bestpolicy$ is evaluated by executing it for 10 episodes in the environment.

\textbf{Initial State Distribution}\quad
To evaluate the policies found by each method, we must initialize the policy at initial states drawn from some distribution $p_0$ at test time. As \algnm does not explicitly consider the initial state distribution, for each environment we choose both a standard $p_0$ from the literature as well as a an alternate distribution $p_0'$ that is translated in the state space, i.e., $p'_0(s) = p_0(s - \Delta_s)$ for some $\Delta_s$.
The alternate distribution allows us evaluate the best policy estimate in an area of state space that is not explicitly given to agents.
We evaluate each policy using initial states sampled from $p_0'$ as a proxy for understanding how well the optimal policy has been identified in regions of the state space beyond where it was initialized.
We give a complete description of the various $p'_0$ for each environment in \Cref{a:envs}.
In Table~\ref{tab:results}, we present results for each method and environment when 
initialized on $p_0$, which is the typical setup for training and evaluating RL algorithms in the literature.
In Table~\ref{tab:shifted_results} we present results for each method evaluated for the initial state distribution $p'_0$. \looseness=-1

\textbf{Comparison Methods}\quad
Besides \algnm and \lsvinm, we compare against several ablations and methods taken from the literature. As a naive baseline for performance in active exploration, we randomly sample state-action pairs from the MDP, evaluate the next states and rewards, and fit $Q$-functions to that data as in the other methods, executing the policy given by the $Q$-function mean (\textbf{Random}). We also perform uncertainty sampling (\textbf{US}) on the $Q$-function, choosing state-action pairs at each step that maximize $\sigma_h^t(\cdot, \cdot)$ as in \cref{eq:predictive_variance}. Additionally, we compare against three online RL baselines: the Double DQN algorithm \citep{van2016deep} where an epsilon-greedy approach is used for exploration (\textbf{DDQN}), 
the bootstrapped DQN \citep{osband2016deep} which keeps an ensemble of $Q$-functions and does exploration acting according to a sampled $Q$-function each exploratory rollout (\textbf{BDQN}),
and a greedy exploration algorithm (\textbf{Greedy}) that chooses $\argmax_a \hat{Q}^t_h(s, a)$ at every step $h$ for a given state $s$ but uses the same value iteration procedure used in the main methods. 
The experiments are conducted with a default exploration bonus $\beta = 0.5$, however, we also empirically analyze the performance for other $\beta$-values in \Cref{a:beta_search}.

\textbf{Environments}\quad
We evaluate all methods on four environments: a \textbf{Cartpole} swing-up problem with dense rewards, a nonlinear \textbf{Navigation} problem, and two problems (\textbf{$\beta$ Tracking} and \textbf{$\beta$ + Rotation}) in plasma control from \citet{mehta2022exploration}, in which plasma is driven to a desired target state. We give further information on the environments used in \Cref{a:envs}.

\begin{table}[t]
\centering
\resizebox{\columnwidth}{!}{

\begin{tabular}{l|ccc|cccc}
    \toprule
    Environment
    & \algnm  & Random & US & \lsvinm & DDQN & BDQN & Greedy\\
    \midrule
    Cartpole & $15.2 \pm 0.5$ & $13.6 \pm 0.5$ & $13.6 \pm 0.6$ &$17.1 \pm 0.7$  & $\mathbf{19.3 \pm 0.7}$ & $\mathbf{19.0 \pm 0.8}$ & $17.2 \pm 0.4$\\
    Navigation & $6.0 \pm 1.7$ & $6.7 \pm 1.4$ & $8.9\pm 0.7$ & $\mathbf{12.9 \pm 0.2}$ &$7.3\pm 1.5$ & $7.2 \pm 0.9$ & $10.9 \pm 1.5$ \\
    $\beta$ Tracking & $12.7 \pm 0.3$ & $11.6 \pm 0.4$ & $11.7 \pm 0.2$  & $\mathbf{13.8 \pm 0.1}$ & $13.4 \pm 0.2$ & $\mathbf{13.9 \pm 0.1}$ & $12.9 \pm 0.3$ \\
    $\beta$ + Rotation & $15.2 \pm 0.6$ & $15.2 \pm 0.6$ & $15.1\pm 0.4$ & $\mathbf{17.8 \pm 0.1}$ & $15.1 \pm 0.4$ & $14.2 \pm 0.8$ & $\mathbf{17.9 \pm 0.1}$\\

        \bottomrule
    \end{tabular}
}
\vspace{-3mm}
\caption{Average Return $\pm$ standard error of executing the identified best policy on the MDP starting from $p_0$ over 5 seeds after collecting 1000 timesteps of data through the use of a generative model (left of line) or episodes starting from $p_0$ (right of line).}
\label{tab:results}
\end{table}

\begin{table}[t]
\vspace{-2mm}
\centering

\resizebox{\columnwidth}{!}{\vspace{-1mm}
    \begin{tabular}{l|ccc|cccc}
    \toprule
    Environment
    & \algnm &  Random & US & \lsvinm & DDQN & BDQN & Greedy \\
    \midrule
\multirow{2}{*}{Cartpole} & \multirow{2}{*}{$\mathbf{16.8 \pm 0.4}$} &  \multirow{2}{*}{$12.9\pm 0.4$} & \multirow{2}{*}{$14.5\pm 0.3$} &
    $12.9 \pm 0.3$ & $15.3 \pm 0.6$ & $16.1 \pm 0.5$ & $13.3 \pm 0.5$\\
    & & & & ($14.2\pm 0.6$) & ($13.7 \pm 1.3$) & ($13.0 \pm 1.2$) & ($\mathbf{16.7 \pm 0.2}$)\\
    \arrayrulecolor{lightgray}\cline{1-1}\cline{2-4}\cline{5-8}
\multirow{2}{*}{Navigation} & \multirow{2}{*}{$\mathbf{22.3 \pm 0.4}$} &  \multirow{2}{*}{$15.3 \pm  0.8$} & \multirow{2}{*}{$17.5 \pm 1.3$} & 
    $13.6 \pm 0.6$ & $17.1 \pm 2.4$ & $21.4 \pm 1.2$ & $15.2 \pm 1.6$\\
    & & & & ($20.6 \pm 1.1$) & ($18.1 \pm 2.6$) & ($18.4 \pm 2.1$) & ($14.0 \pm 0.8$)\\
    \arrayrulecolor{lightgray}\cline{1-1}\cline{2-4}\cline{5-8}
\multirow{2}{*}{$\beta$ Tracking} & \multirow{2}{*}{$\mathbf{14.0\pm 0.4}$} &  \multirow{2}{*}{$9.2 \pm 0.9$} & \multirow{2}{*}{$12.5 \pm 0.1$} & 
    $13.3 \pm 0.3$ & $\mathbf{13.8 \pm 0.1}$ & $\mathbf{14.0 \pm 0.1}$ & $12.5 \pm 0.4$\\
    & & & & ($\mathbf{13.7 \pm 0.2}$) & ($\mathbf{13.7 \pm 0.2}$) & ($\mathbf{13.7 \pm 0.1}$) & ($\mathbf{13.8 \pm 0.1}$)\\
    \arrayrulecolor{lightgray}\cline{1-1}\cline{2-4}\cline{5-8}
\multirow{2}{*}{$\beta$ + Rotation} & \multirow{2}{*}{$\mathbf{14.3\pm 0.2}$} & \multirow{2}{*}{$12.8 \pm 1.4$} & \multirow{2}{*}{$13.3 \pm 0.5$} & 
    $10.1 \pm 0.4$ & $12.9 \pm 1.1$ & $13.7 \pm 0.8$ & $12.8 \pm 0.7$\\
    & & & & ($12.7 \pm 0.3$) & ($13.4 \pm 0.3$) & ($12.7 \pm 1.2$) & ($7.5 \pm 0.2$)\\
\arrayrulecolor{black}
    \bottomrule
\end{tabular}
    
}
\vspace{-1mm}
\caption{Average Return $\pm$ standard error of executing the identified best policy on the MDP starting from $p'_0$ over 5 seeds after collecting 1000 timesteps of data through the use of a generative model (left) and online RL methods (right). For online methods, numbers without parentheses refer to training from episodes starting from $p_0$, whereas numbers in parentheses use the uniform distribution on the state space as initial states during training.}
\label{tab:shifted_results}
\vspace{-5mm}
\end{table}

\textbf{Results}\quad
As our bound on the value function error uses the $\ell^\infty(\S)$-norm, our method provably finds an approximately optimal policy regardless of the initial distribution. 
The \lsvinm method is able to quickly learn a policy for the initial state distribution $p_0$ given at training time, as it is designed to minimize regret on the episodic MDP initialized at $p_0$. This can be seen clearly in Table~\ref{tab:results}, which shows that after 1000 samples, \lsvinm performs the best on nearly every environment. In the online setting when the start state distribution is known, greedy and $\epsilon$-greedy methods like DDQN also perform relatively well.
We also see in Table \ref{tab:results} that \algnm does not perform particularly well compared to the online methods given the 1,000-sample budget. 
This is to be expected, as the online methods naturally collect data that is reachable from $p_0$ and in particular \lsvinm is designed to minimize regret on episodes beginning from $p_0$.
However, this focus on performing well when starting from $p_0$ comes at the expense of active exploration and identifying the best policy uniformly across the state space.\looseness=-1

As shown in Table~\ref{tab:shifted_results}, 
\algnm outperforms the baselines when evaluated on a \emph{different} initial state distribution $p'_0$, even when the online algorithms are initialized from a uniform initial state distribution $p_0$ during training. This is unsurprising, as \algnm is precisely built for this setting and identifies the best action uniformly across the state space, unlike \lsvinm which aims to minimize regret starting from an initial state distribution.
We see that uncertainty sampling outperforms a random data selection strategy and is comparable to the online methods.
However, as we discuss above (in \cref{sec: algorithm}), in general it is the uncertainty in the value of the best action at a state and not the uncertainty in the value of a state-action pair that needs to be reduced in order to more efficiently find the best policy.
We see that, in general, the online methods perform better on $p'_0$ when they train on episodes uniformly initialized on the state space. This suggests that in these cases, it is helpful to make sure that the evaluation distribution $p'_0$ is supported by the training distribution $p_0$. We also note that (as we describe in \Cref{a:envs}) the maximum possible score on \textbf{Navigation} starting from $p'_0$ is higher than that from $p_0$ due to a starting distribution closer to the goal. We believe that these results give empirical support to the theoretical claims of Section~\ref{s:theory}.

\begin{figure}
\centering
    \includegraphics[width=\linewidth]{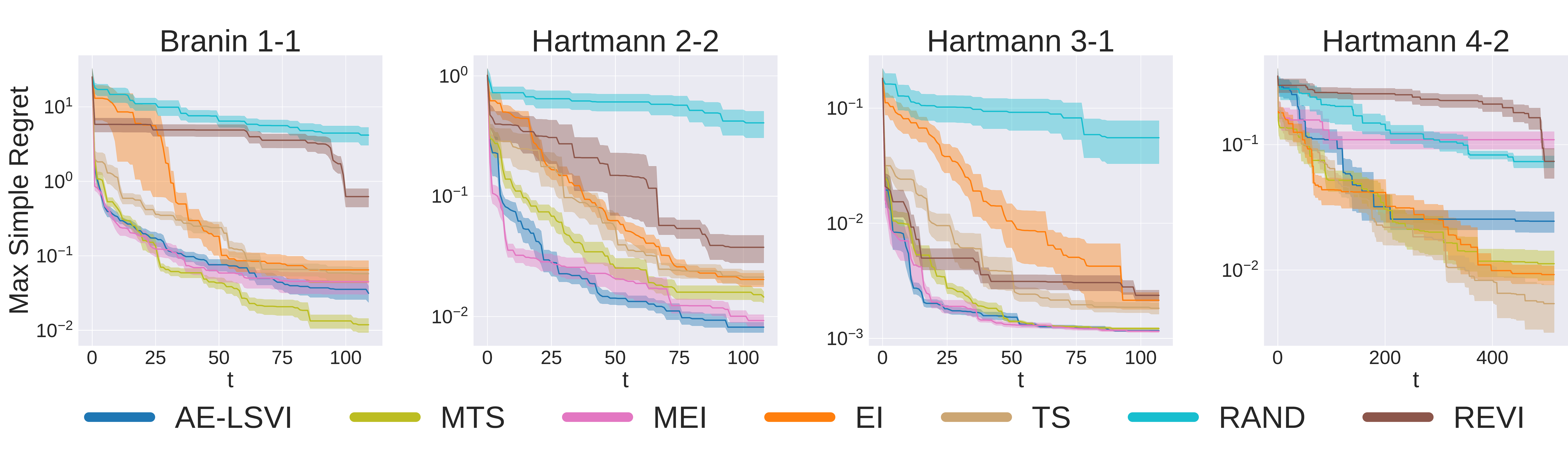}
    \vspace{-4mm}
    \caption{The maximum simple regret seen in any given context for the offline contextual Bayesian optimization experiments. The shaded regions show the standard error over 10 different seeds.}
    \label{fig:corr_ocbo}
    \vspace{-4mm}
\end{figure}

\vspace{-2mm}
\subsection{Offline Contextual Bayesian Optimization Experiments}
\vspace{-2mm}
\label{subsec:ocbo_experiments}
We test the performance of AE-LSVI (Algorithm~\ref{alg:algo_bo} in \Cref{sec:offline_BO_app}) in the offline contextual Bayesian optimization setting. In particular, we test the algorithm on the optimization problems presented in Section 3 of \citet{char2019offline}, each having a discrete context space but continuous action space. In all experiments, we average over 10 seeds. At the beginning of each experiment, the values corresponding to five actions, chosen uniformly at random, are observed for each context. Every time new data is observed, the hyperparameters of the GP are tuned according to the marginal likelihood. We leverage the Dragonfly library for these experiments \citep{kandasamy2020tuning}.\looseness=-1

\textbf{Comparison Methods}\quad For baselines, we compare against the Multi-task Thompson Sampling (\textbf{MTS}) method presented by \citet{char2019offline}, which picks context and action based on the largest improvement over what has been seen according to samples from the posterior. In addition, we compare to the strategy of picking the context with the greatest expected improvement. This method was presented by \citet{swersky2013multi}, and we refer to it as Multi-task Expected Improvement (\textbf{MEI}), following \citet{char2019offline}. We also compare against the \textbf{REVI} algorithm \citep{pearce2018continuous}, which picks contexts and actions that will increase the posterior mean the most across all contexts. Additionally, we show the performance of naive Thompson sampling (\textbf{TS}) and expected improvement (\textbf{EI}), where contexts are picked in a round robin fashion. Lastly, we show the performance of randomly selecting contexts and actions at each time step (\textbf{RAND}).

\textbf{Experiment Tasks}\quad To evaluate the method in the case where the objective function is correlated in context space, we take a higher dimensional function and assign some dimensions to context space and the rest to action space. A single GP with a squared exponential kernel is then used to model the objective function. In particular, the Branin-Hoo \citep{branin1972widely}, Hartmann 4, and Hartmann 6 \citep{picheny2013benchmark} functions are used to create Branin 1-1, Hartmann 2-2, Hartmann 3-1, and Hartmann 4-2, where the first number corresponds to the context dimension and the second to the action dimension. These functions have 10, 9, 8, and 16 equispaced contexts, respectively.

\textbf{Results}\quad Figure~\ref{fig:corr_ocbo} shows the maximum simple regret seen in any given context as a function of $t$ values observed. As seen from these plots, \algnm often is one of the best performing methods. The only task that AE-LSVI struggles on is Hartmann 4-2. We believe that estimating the amount of improvement to be gained at each context is difficult for this benchmark task. This is supported by the fact none of the more sophisticated methods outperforms the baseline that applies \textbf{EI} in a round-robin fashion. It is likely that improved modeling or hyperparameter selection is needed in order for these methods to achieve the highest performance on this task.
 \vspace{-2mm}
\section{Conclusion}
\vspace{-2mm}
We provided a new kernelized least-squares value iteration algorithm for RL in the generative model setting, which aims to learn a near-optimal policy for all initial states by actively exploring states for which the best action is the most uncertain. Our algorithm identifies a near-optimal policy uniformly over the entire state space and attains polynomial sample complexity. Experimentally, we demonstrate that it outperforms other RL algorithms in a variety of environments when robustness to the initial state is required. Perhaps the most immediate direction for future work is to extend the algorithm to the local access model \citep{yin2022efficient} in which the simulator can be queried only for states that have been encountered in previous simulation steps.

\vspace{2mm}
\subsection*{Acknowledgments}
Johannes Kirschner gratefully acknowledges funding from the SNSF Early Postdoc.Mobility fellowship P2EZP2\_199781.

Ian Char is supported by the National Science Foundation Graduate Research Fellowship Program under Grant No. DGE1745016 and DGE2140739. Any opinions, findings, and conclusions or recommendations expressed in this material are those of the author(s) and do not necessarily reflect the views of the National Science Foundation.

Viraj Mehta was supported in part by US Department of Energy grants under contract numbers DE-SC0021414 and DE-AC02-09CH1146.

Willie Neiswanger was supported in part by NSF (\#1651565), AFOSR (FA95501910024), ARO (W911NF-21-1-0125), CZ Biohub, and Sloan Fellowship.

In addition, this project has received support from the European Research Council (ERC) under the European Union’s Horizon 2020 research and innovation programme grant No. 815943.

\subsection*{Reproducibility Statement}
The proof of \Cref{thm:main_thm} is provided in \Cref{section:proof_of_main_thm} and the proof of \Cref{corr:bo} is given in \Cref{sec:offline_BO_app}. The supplementary material includes the source code for the experiments. It also includes a requirements file and README with full instructions on how to run the RL and BO experiments. Although we are not allowed to provide the data used for running the $\beta$ Tracking and $\beta$ + Rotation experiments at this time, all other experiments can be run using the provided code. Lastly, experimental details about the implementation and the environments used can be found in \Cref{a:implementation} and \Cref{a:envs}, respectively.

\vspace{2mm}
\bibliography{ref}
\bibliographystyle{iclr2021_conference}

\newpage
\appendix
\section{Appendix}
\label{section:appendix}
\subsection{Auxiliary Results}

\begin{lemma} \label{lemma:ucb_geq_qstar_geq_lcb}
        Let $t \in [T]$. Then, for every $(s,a) \in \S \times \A$,
        \begin{enumerate}
            \item If $\qbar(s,a) \geq T^*_h \ucb[h+1](s,a)$ holds for all $\hinset$, then $\qbar(s,a) \geq  \qstarh(s,a)$ is true for all $\hinset$.
            \item If $\qubar(s,a) \leq T^*_h \lcb[h+1](s,a)$ holds for all $\hinset$, then $ \qstarh(s,a) \geq \qubar(s,a)$ is true for all $\hinset$.
        \end{enumerate}
\end{lemma}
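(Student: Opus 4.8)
The plan is to prove both statements by backward induction on $h$, running from $h = H+1$ down to $h = 1$, with the monotonicity of the Bellman optimality operator $T_h^*$ as the single key ingredient. Throughout, I would read each hypothesis as holding simultaneously for all $(s,a) \in \S \times \A$ at a given $h$, which is essential: the quantity $T_h^* \ucb[h+1](s,a)$ depends on the values of $\ucb[h+1]$ at states $s' \neq s$ through the term $\E_{s' \sim \P(\cdot|s,a)}[\max_{a'} \ucb[h+1](s',a')]$, so a purely pointwise-in-$(s,a)$ argument cannot close the induction.

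First I would record the monotonicity fact: if $Q \geq Q'$ pointwise on $\S \times \A$, then $T_h^* Q \geq T_h^* Q'$ pointwise. This is immediate from the definition in \cref{eq:bellman_optimality_operator}, since $\max_{a'} Q(s',a') \geq \max_{a'} Q'(s',a')$ for every $s'$, the expectation preserves this inequality, and adding the common reward term $r_h(s,a)$ preserves it. I would also invoke the identity $\qstarh = T_h^* \qstarh[h+1]$ for every $\hinset$, which is simply the Bellman optimality equation combined with $V_{h+1}^*(s') = \max_{a'}\qstarh[h+1](s',a')$, extending the usual convention $\qstarh[H+1] = 0$ so as to match the algorithm's initialization $\qbare_{H+1}^t = \qubare_{H+1}^t = 0$.

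For statement (1), the base case $h = H+1$ holds trivially since $\qbare_{H+1}^t = 0 = \qstarh[H+1]$. For the inductive step, I assume $\ucb[h+1](s,a) \geq \qstarh[h+1](s,a)$ for all $(s,a)$; monotonicity then gives $T_h^* \ucb[h+1] \geq T_h^* \qstarh[h+1] = \qstarh$, and combining this with the lemma's hypothesis $\qbar(s,a) \geq T_h^* \ucb[h+1](s,a)$ yields $\qbar(s,a) \geq \qstarh(s,a)$, which advances the induction to level $h$. Statement (2) is entirely symmetric: the base case is $\qubare_{H+1}^t = 0 = \qstarh[H+1]$, and in the inductive step the hypothesis $\qstarh[h+1] \geq \lcb[h+1]$ together with monotonicity gives $\qstarh = T_h^* \qstarh[h+1] \geq T_h^* \lcb[h+1] \geq \qubar$, where the last inequality uses the assumed $\qubar(s,a) \leq T_h^* \lcb[h+1](s,a)$.

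The only genuine subtlety — and the step I would treat most carefully — is the uniformity over $(s,a)$ noted above: the inductive hypothesis at level $h+1$ must be a statement about the entire function $\ucb[h+1]$ (respectively $\lcb[h+1]$) so that monotonicity of $T_h^*$ can be applied, rather than a bound at a single fixed point. Everything else reduces to routine chaining of inequalities once monotonicity of $T_h^*$ and the terminal identity $\qstarh[H+1] = 0$ are in hand.
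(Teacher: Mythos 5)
Your proof is correct and follows essentially the same route as the paper's: backward induction on $h$, combining the assumed inequality $\qbar \geq T_h^*\ucb[h+1]$ with the Bellman identity $\qstarh = T_h^*\qstarh[h+1]$, where your explicitly stated monotonicity of $T_h^*$ is exactly the step the paper carries out inline by passing the inductive hypothesis through the expectation. Your added care that the hypothesis and inductive statement must hold uniformly over $(s,a)$ (not pointwise) is a fair tightening of the paper's write-up, which fixes $(s,a)$ at the outset yet invokes the inductive hypothesis at other states $s'$ inside the expectation.
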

\begin{proof}
    In order to prove part 1., let $\sinset$ and $\ainset$ and assume $\ucb (s,a)\geq T^*_h \ucb[h+1] (s,a)$ for all $\hinset$ and $\tinset$. We prove $\forall  \hinset$,  $\ucb (s,a) \geq  \qstarh (s,a)$ by induction on $h=H, H-1, \dots, 1$.
    For the initial case $h=H$, we have 
        \begin{align}
            \ucb[H] (s,a)
            \overset{\text{assumption}}&{\geq} 
            T_h^* \ucb[H+1](s,a)
            \\ \overset{ \text{Def. of } T_h^* } & {=} 
                r_H(s,a) + \EsaposH[s,a] \Big[\max_{\aaposinset} \ucb[H+1] (s',a') \Big] 
            \\&=r_H (s,a)
            \\&= \qstarh[H](s,a).
        \end{align}
    For the inductive step, we assume that $\qstarh[h+1](s,a) \leq \ucb[h+1](s,a)$. Then,
        \begin{align}
                \qstarh(s,a)
            &=
                T_h^* \qstarh[h+1](s,a)
            \\ \overset{ \text{Def. of } T_h^* } & {=}
                r_h(s,a) + \Esaposh[s,a] \Big[\max_{\aaposinset} \qstarh[h+1] (s',a')\Big] 
            \\ \overset{\text{inductive hypothesis}} & {\leq}
                r_h(s,a) + \Esaposh[s,a] \Big[\max_{\aaposinset} \ucb[h+1] (s',a')\Big] 
            \\ \overset{ \text{Def. of } T_h^* } & {=} 
                T_h^* \ucb[h+1](s,a)
            \\ \overset{\text{assumption}} & {\leq}
                \ucb(s,a).
        \end{align}
    This shows $\ucb(s,a) \geq \qstarh(s,a)$ for all $\hinset$ and thus concludes the proof of the first claim. The second part can be shown analogously.
    
\end{proof}

The following is a standard result that can be found in multiple works.
\begin{lemma}\label{lemma:bound_for_sum_of_posterior_sd}
    Consider a kernel $k: \X \times \X \to \R$ such that $k(x,x) \leq 1$ for every $x \in \X$.  Then for all $\hinset$ and $\lambda\geq 1$ we have 
    \begin{equation}
        \sum_{t=1}^T \sigma_{h}^t(s_{h}^t,a_{h}^t)
        \leq 
        \sqrt{3\Gamma_k(T, \lambda) T}.
    \end{equation}
\end{lemma}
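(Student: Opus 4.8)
The plan is to reduce the sum to a sum of squared posterior standard deviations via Cauchy--Schwarz, and then to control that quantity by the log-determinant of the Gram matrix of the queried points, which by definition is bounded by the information gain $\Gamma_k(T,\lambda)$. Writing $x_h^t = (s_h^t, a_h^t)$ for the point selected at step $h$ of episode $t$, the first move is the Cauchy--Schwarz inequality
\[
\sum_{t=1}^T \sigma_h^t(s_h^t,a_h^t) \;\leq\; \sqrt{T}\,\Big(\sum_{t=1}^T \big(\sigma_h^t(x_h^t)\big)^2\Big)^{1/2},
\]
so it suffices to prove $\sum_{t=1}^T (\sigma_h^t(x_h^t))^2 \leq 3\,\Gamma_k(T,\lambda)$.

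To bound the sum of squares I would first observe, from \cref{eq:predictive_variance}, that $\bar\sigma_t^2 := \lambda\,(\sigma_h^t(x_h^t))^2 = k(x_h^t,x_h^t) - k_h^t(x_h^t)^T(K_h^t+\lambda I)^{-1}k_h^t(x_h^t)$ is the (regularized) posterior variance of $x_h^t$ given the $t-1$ earlier points $\{x_h^i\}_{i<t}$; since a posterior variance never exceeds its prior value $k(x_h^t,x_h^t)\leq 1$ and $\lambda \geq 1$, each term satisfies $z_t := (\sigma_h^t(x_h^t))^2 = \lambda^{-1}\bar\sigma_t^2 \in [0,1]$. On $[0,1]$ the map $z \mapsto z/\ln(1+z)$ is increasing, so $z \leq \tfrac{1}{\ln 2}\ln(1+z)$ there, and applying this with $z = z_t$ gives
\[
\sum_{t=1}^T \big(\sigma_h^t(x_h^t)\big)^2 \;\leq\; \frac{1}{\ln 2}\sum_{t=1}^T \ln\!\big(1 + \lambda^{-1}\bar\sigma_t^2\big).
\]
The next step is the standard telescoping identity, which states that the sum of these one-step log-increments equals the log-determinant of the full Gram matrix: with $D = \{x_h^t\}_{t\in[T]}$,
\[
\sum_{t=1}^T \ln\!\big(1 + \lambda^{-1}\bar\sigma_t^2\big) \;=\; \ln\big|\,I + \lambda^{-1}K_{D,D}\,\big|.
\]
By the definition of $\Gamma_k(T,\lambda)$ (a supremum over all datasets of size at most $T$) this is at most $2\,\Gamma_k(T,\lambda)$, and since $2/\ln 2 \approx 2.885 < 3$ we obtain $\sum_t (\sigma_h^t(x_h^t))^2 \leq 3\,\Gamma_k(T,\lambda)$; combining with the Cauchy--Schwarz step yields the claimed bound.

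The only genuinely nontrivial ingredient is the telescoping determinant identity in the third display; everything else is elementary. I would prove it by induction on $t$, using the matrix determinant lemma for a rank-one update: appending the point $x_h^t$ multiplies $|I + \lambda^{-1}K|$ by exactly $1 + \lambda^{-1}\bar\sigma_t^2$, where $\bar\sigma_t^2$ is the posterior variance given the points already present. Alternatively, since the lemma is stated as a known result, one may simply cite its appearance in the GP-bandit literature. Care is needed only to check that the regularization parameter $\lambda$ in \cref{eq:predictive_variance} matches the $\lambda$ in the definition of $\Gamma_k(T,\lambda)$ and that $\sigma_h^t$ is indeed computed from the first $t-1$ queries, so that the variances telescope correctly.
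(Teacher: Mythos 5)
Your proof is correct, but it is more self-contained than the paper's, which is essentially a citation: the paper invokes Lemma 3 of \cite{bogunovic2021misspecified}, stating that $\sum_{t=1}^T \sigma_h^t(s_h^t,a_h^t) \leq \sqrt{\lambda^{-1}(2\lambda+1)\,\Gamma_k(T,\lambda)\,T}$ for any $\lambda>0$, and then observes that $\lambda \geq 1$ gives $\lambda^{-1}(2\lambda+1) = 2+\lambda^{-1} \leq 3$. You instead reprove that lemma from first principles: Cauchy--Schwarz, the elementary inequality $z \leq \ln(1+z)/\ln 2$ on $[0,1]$ (which is exactly where both hypotheses $k(x,x)\leq 1$ and $\lambda \geq 1$ enter, since they guarantee $z_t = \lambda^{-1}\bar\sigma_t^2 \in [0,1]$), the rank-one telescoping identity $\sum_{t=1}^T \ln\bigl(1+\lambda^{-1}\bar\sigma_t^2\bigr) = \ln\bigl|I + \lambda^{-1}K_{D,D}\bigr|$, and the definition of $\Gamma_k(T,\lambda)$ as a supremum over datasets of size at most $T$. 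All steps check out; in particular, you correctly track the $\lambda^{-1/2}$ normalization built into the definition of $\sigma_h^t$ in \cref{eq:predictive_variance}, which is the one place a factor could silently go wrong, and your Schur-complement induction for the determinant identity is the standard and valid argument. Your constant $2/\ln 2 \approx 2.885$ is even marginally sharper than the $2 + \lambda^{-1} \leq 3$ the paper inherits from the cited lemma, though both are then rounded up to $3$. What the paper's route buys is brevity and reuse of a known black box; what yours buys is a self-contained derivation that makes explicit where each hypothesis is used and why the variances telescope (namely, that $\sigma_h^t$ is conditioned on the first $t-1$ queries at the same step $h$, so the dataset $D=\{(s_h^t,a_h^t)\}_{t\in[T]}$ grows one point per episode).
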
\begin{proof}
We can for example invoke the result of Lemma 3 in \cite{bogunovic2021misspecified} 
that in our notation reads as:
\begin{equation}
    \sum_{t=1}^T \sigma_{h}^t(s_{h}^t,a_{h}^t) \leq \sqrt{ \lambda^{-1}(2\lambda + 1) \Gamma_k(T, \lambda) T},
\end{equation}
for $\lambda > 0$. Setting $\lambda \geq 1$, 
we obtain 
\begin{equation}
    \sum_{t=1}^T \sigma_{h}^t(s_{h}^t,a_{h}^t) \leq \sqrt{3\Gamma_k(T, \lambda) T}.
\end{equation}

\end{proof}

\subsection{Proof of \cref{thm:main_thm}}
\label{section:proof_of_main_thm}
Let $\bestpolicy$ be the best-policy estimate returned by the algorithm. Recall the definition of 
    \begin{equation} \label{eq:mixed_optimal_policy_first_def}
        \mixedpolicy := \Big( \mixedpolicyh \Big) _ {h'=1}^H 
            := \begin{cases}
                \hat{\pi}_{T, h'} & \text{for } h' = 1, \dots, h-1 \\
                \pi_{h'}^{*} & \text{for } h'=h, \dots, H
                \end{cases}
    \end{equation}
as the policy that equals our best-policy estimate $\bestpolicy$ until step $h-1$ and then equals the optimal policy $\pi^*$.

We start the proof with the following useful lemma.

\begin{lemma}\label{lemma:regret_decomposition_trajectory_expectation_detailed_formula}
        Let $\bestpolicy$ be a best-policy estimate, let $s \in \S$ be an initial state, and let $\hinset$. Using the notation from \cref{eq:mixed_optimal_policy_first_def}, we obtain 
        \begin{equation*}
                V_1^{\mixedpolicy[h]}(s) - V_1^{\mixedpolicy[h+1]}(s) =
                \E_{a_1,\dots, s_h \text{ following } \bestpolicy} \bigg[
                     Q^*_h \big(s_h,\pi^*_h(s_h) \big) 
                     - Q^*_h \big(s_h,\bestpolicyh(s_h) \big) 
                    \big|s_1=s
                \bigg].
        \end{equation*}
\end{lemma}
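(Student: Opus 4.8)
The plan is to exploit the coupling between the two policies. By definition, $\mixedpolicy[h]$ and $\mixedpolicy[h+1]$ both follow $\bestpolicy$ on steps $1,\dots,h-1$ and both follow $\pi^*$ on steps $h+1,\dots,H$; they differ \emph{only} in the action taken at step $h$, where $\mixedpolicy[h]$ plays $\pi^*_h$ while $\mixedpolicy[h+1]$ plays $\bestpolicyh$. I would therefore roll out the common prefix $s_1=s,a_1,\dots,a_{h-1},s_h$ under $\bestpolicy$, which induces an identical trajectory distribution and identical rewards $r_1,\dots,r_{h-1}$ for both policies. Upon subtracting the two value functions, these prefix rewards cancel exactly, leaving only the contributions of steps $h,\dots,H$.

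Next I would condition on the state $s_h$ reached at step $h$ and evaluate the value-to-go from step $h$ under each policy. Under $\mixedpolicy[h]$, the agent follows $\pi^*$ from step $h$ onward, so the value-to-go equals $V_h^*(s_h)$; since $\pi^*$ is greedy with respect to $\{Q_h^*\}$, the Bellman optimality equation gives $V_h^*(s_h) = \max_{a \in \A} Q_h^*(s_h,a) = Q_h^*(s_h,\pi^*_h(s_h))$. Under $\mixedpolicy[h+1]$, the agent instead plays $a_h = \bestpolicyh(s_h)$ at step $h$ and then follows $\pi^*$; by the Bellman equation $Q_h^*(s_h,a) = r_h(s_h,a) + \E_{s'\sim \P(\cdot|s_h,a)}[V_{h+1}^*(s')]$, this value-to-go is precisely $Q_h^*(s_h,\bestpolicyh(s_h))$.

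Subtracting the two value-to-go expressions and taking the expectation over the shared prefix drawn from $\bestpolicy$ (with $s_1=s$) via the tower property then yields exactly the claimed identity
\begin{equation*}
    V_1^{\mixedpolicy[h]}(s) - V_1^{\mixedpolicy[h+1]}(s) = \E_{a_1,\dots,s_h \text{ following } \bestpolicy}\Big[ Q^*_h(s_h,\pi^*_h(s_h)) - Q^*_h(s_h,\bestpolicyh(s_h)) \,\big|\, s_1=s\Big].
\end{equation*}

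The main thing to handle carefully is this coupling/decomposition step: I must justify rigorously that the expectation over the full trajectory factors into the shared prefix governed by $\bestpolicy$ times the step-$h$-onward continuation, so that the first $h-1$ rewards genuinely cancel and the only surviving term is the one-step discrepancy at step $h$. This is essentially a single-step instance of the performance-difference lemma, and the Markov property together with linearity of expectation makes it go through; the bookkeeping of which policy governs which step is where an error would most easily creep in, so I would state explicitly at each stage which of $\bestpolicy$, $\pi^*_h$, or $\pi^*$ is in force.
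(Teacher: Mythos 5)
Your proposal is correct and takes essentially the same route as the paper's own proof: decompose each of $V_1^{\mixedpolicy[h]}(s)$ and $V_1^{\mixedpolicy[h+1]}(s)$ into an expectation over the shared prefix $(s_1,a_1,\dots,s_h)$ drawn from $\bestpolicy$, cancel the identical prefix rewards, and identify the surviving values-to-go at step $h$ with $Q^*_h\big(s_h,\pi^*_h(s_h)\big)$ and $Q^*_h\big(s_h,\bestpolicyh(s_h)\big)$ respectively. The only cosmetic difference is that the paper writes out the nested expectations explicitly and subtracts them, whereas you invoke the tower property and the Bellman (optimality) equation in words.
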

\begin{proof}
    To formally prove the lemma, we first explicitly express $V_1^{\mixedpolicy[h]}(s)$ and $V_1^{\mixedpolicy[h+1]}(s)$ for an arbitrary initial state $s \in \S$ as
    \begin{align}
            &V_1^{\mixedpolicy[h]}(s) 
        \\&=
            \label{eq:lemma:regret_decomposition_trajectory_expectation_detailed_formula_proof_eq_1a}
            \E_{a_1,\dots, s_H \text{ following } \mixedpolicy[h] |s_1=s}
            \Big[
                \sum_{h'=1}^H r_{h'}( s_{h'}, a_{h'}) 
            \Big]
        \\& =
            \label{eq:lemma:regret_decomposition_trajectory_expectation_detailed_formula_proof_eq_1b}
            \E_{a_1,\dots, s_h \text{ following } \bestpolicy |s_1=s} 
            \Bigg[
                \E_{a_h,\dots, s_H \text{ following } \pi^* |s_h}
                \bigg[
                    \sum_{h'=1}^H r_{h'}( s_{h'}, a_{h'}) 
                \bigg]
            \Bigg]
        \\& =
            \label{eq:lemma:regret_decomposition_trajectory_expectation_detailed_formula_proof_eq_1c}
            \E_{a_1,\dots, s_h \text{ following } \bestpolicy |s_1=s} \Bigg[
                    \sum_{h'=1}^h r_{h'}( s_{h'}, a_{h'})  
                + 
                \E_{a_h,\dots, s_H \text{ following } \pi^* 
                |s_h}
                \bigg[
                    \sum_{h'=h+1}^H r_{h'}( s_{h'}, a_{h'}) 
                \bigg]
             \Bigg],
    \end{align}
    and
    \begin{align}
            &V_1^{\mixedpolicy[h+1]}(s_1) 
        \\&=
            \label{eq:lemma:regret_decomposition_trajectory_expectation_detailed_formula_proof_eq_2a}
            \E_{a_1,\dots, s_H \text{ following } \mixedpolicy[h+1] |s_1=s}
            \Big[
                \sum_{h'=1}^H r_{h'}( s_{h'}, a_{h'}) 
            \Big]
        \\& =
            \label{eq:lemma:regret_decomposition_trajectory_expectation_detailed_formula_proof_eq_2b}
            \E_{a_1,\dots, s_h \text{ following } \bestpolicy |s_1=s} \Bigg[
            \E_{a_h,s_{h+1} \text{ following } \bestpolicy |s_h} \bigg[
            \E_{a_{h+1},\dots, s_H \text{ following } \pi^* |s_{h+1}} \nonumber
            \\& 
            \Big[
                \sum_{h'=1}^h r_{h'}( s_{h'}, a_{h'}) 
                + 
                \sum_{h'=h+1}^H r_{h'}( s_{h'}, a_{h'}) 
            \Big]
            \bigg] \Bigg]
        \\& =
            \label{eq:lemma:regret_decomposition_trajectory_expectation_detailed_formula_proof_eq_2c}
            \E_{a_1,\dots, s_h \text{ following } \bestpolicy |s_1=s} \Bigg[
                \sum_{h'=1}^h r_{h'}( s_{h'}, a_{h'})       
            \nonumber \\& + \E_{a_h,s_{h+1} \text{ following } \bestpolicy |s_h} \bigg[
            \E_{a_{h+1},\dots, s_H \text{ following } \pi^*
            |s_{h+1}}
            \Big[
                \sum_{h'=h+1}^H r_{h'}( s_{h'}, a_{h'}) 
            \Big]
            \bigg] \Bigg].
    \end{align}
    \cref{eq:lemma:regret_decomposition_trajectory_expectation_detailed_formula_proof_eq_1a,eq:lemma:regret_decomposition_trajectory_expectation_detailed_formula_proof_eq_2a} use the definition of $V_1^{\pi}$, \cref{eq:lemma:regret_decomposition_trajectory_expectation_detailed_formula_proof_eq_1b,eq:lemma:regret_decomposition_trajectory_expectation_detailed_formula_proof_eq_2b} use the definition of $ \mixedpolicy$ and $ \mixedpolicy[h+1]$ from \cref{eq:mixed_optimal_policy_first_def}, and  \cref{eq:lemma:regret_decomposition_trajectory_expectation_detailed_formula_proof_eq_1c,eq:lemma:regret_decomposition_trajectory_expectation_detailed_formula_proof_eq_2c} use the property that integration is a linear operator. 
    
    \Cref{lemma:regret_decomposition_trajectory_expectation_detailed_formula} then follows from  \cref{eq:lemma:regret_decomposition_trajectory_expectation_detailed_formula_proof_eq_1c,eq:lemma:regret_decomposition_trajectory_expectation_detailed_formula_proof_eq_2c} as well as the definition of $\qstarh$:
    \begin{align}
        &   V_1^{\mixedpolicy[h]}(s_1) - V_1^{\mixedpolicy[h+1]}(s_1) =
                \E_{a_1,\dots, s_h \text{ following } \pi^T |s_1=s} \Bigg[
                    \sum_{h'=1}^h r_{h'}( s_{h'}, a_{h'})  - \sum_{h'=1}^h r_{h'}( s_{h'}, a_{h'}) 
                    \nonumber\\&\quad +
                        \E_{a_h,\dots, s_H \text{ following } \pi^* 
                        |s_h}
                        \bigg[
                            \sum_{h'=h+1}^H r_{h'}( s_{h'}, a_{h'}) 
                        \bigg]
                    \nonumber\\&\quad -
                       \E_{a_h,s_{h+1} \text{ following } \bestpolicy |s_h} \bigg[
                            \E_{a_{h+1},\dots, s_H \text{ following } \pi^* |s_{h+1}}
                            \Big[
                                \sum_{h'=h+1}^H r_{h'}( s_{h'}, a_{h'}) 
                            \Big]
                        \bigg]
                \Bigg]
        \\ &\quad=
                \E_{a_1,\dots, s_h \text{ following } \bestpolicy|s_1=s } \bigg[
                     Q^*_h \big(s_h,\pi^*_h(s_h) \big) 
                     - Q^*_h \big(s_h,\bestpolicyh(s_h) \big) 
                \Bigg].
    \end{align}
\end{proof}
We proceed with the proof by using the notation from \cref{eq:mixed_optimal_policy_first_def}. We can decompose the instantaneous regret for an arbitrary initial state $s \in \S$ as follows:\\
\begin{align}
    V_1^{*} (s) - V_1^{\bestpolicy} (s) &= V_1^{\mixedpolicy[1]} (s)- V_1^{\mixedpolicy[H+1]}(s) \\
    &= \sum_{h=1}^H \Big( V_1^{\mixedpolicy[h]}(s) - V_1^{\mixedpolicy[h+1]}(s) \Big)
    \\ &\overset{\text{\Cref{lemma:regret_decomposition_trajectory_expectation_detailed_formula}}}{=}
    \sum_{h=1}^H \E_{ s_1, a_1, \dots, s_h \text{following } \bestpolicy }  \Big[
               Q^*_h \big(s_h, \pi^*_h(s_h) \big) - Q^*_h \big(s_h,\bestpolicyh(s_h) \big) \big| s_1 = s\Big]. \label{eq:regret_decomposition_last}
\end{align}

The intuition behind \cref{lemma:regret_decomposition_trajectory_expectation_detailed_formula} used in \cref{eq:regret_decomposition_last} is as follows. Both $V_1^{\mixedpolicy[h]}(s)$ and $V_1^{\mixedpolicy[h+1]}(s)$ refer to the same random trajectory segment $(s_1,a_1,\dots, s_h)$ until step $h$ (i.e., the same initial state and policy are used), which is captured as $\E_{s_1, a_1,\dots, s_h \text{ following } \bestpolicy}[\cdot]$. For the remaining steps $h, \dots, H$, the policies only differ at step $h$, a property which is captured in the difference $Q^*_h \big(s_h,\pi^*_h(s_h) \big) - Q^*_h \big(s_h,\bestpolicyh(s_h) \big)$. 

Conditioning on the event in \Cref{asm:confidence_assumption} holding true and by invoking \cref{lemma:ucb_geq_qstar_geq_lcb}, we have that:
\begin{equation}\label{eq:upper_lower_conf_bound}
  \lcb[h](s, a)  \leq Q^*_h (s, a) \leq   \ucb[h](s, a),
\end{equation}
holds for every $\hinset$, $\tinset$, and $(s,a) \in \S \times \A$. Next, we proceed to bound $Q^*_h \big(\cdot, \pi^*_h(\cdot) \big) - Q^*_h\big(\cdot,\bestpolicyh(\cdot))$ from \cref{eq:regret_decomposition_last} uniformly on $\S$. We have:
\begin{align}
        Q^*_h (s, \pistarh(s)) - Q^*_h(s,\bestpolicyh(s)) 
     \overset{\text{\cref{eq:upper_lower_conf_bound}}}&{\leq}
        Q^*_h (s, \pistarh(s)) -  \max_{t \in [T]}\; \lcb[h](s, \bestpolicyh(s)) \\ 
    \overset{ \text{Def. of } \bestpolicyh} & {=} 
     Q^*_h (s, \pistarh(s))  - \max_{a \in \A} \max_{t \in [T]}\; \lcb[h](s, a) \\
        &= \min_{\tinset} \Big(Q^*_h (s, \pistarh(s))  - \max_{a \in \A} \; \lcb[h](s, a)\Big) \\
         \overset{\text{Def. of } \pistarh} & {=}
         \min_{\tinset} \Big(\max_{a \in \A} Q^*_h(s, a) - \max_{a \in \A} \; \lcb[h](s, a)\Big) \\
        \overset{\text{\cref{eq:upper_lower_conf_bound}}} & {\leq}
\min_{\tinset} \Big(\max_{a \in \A} \ucb[h](s, a) - \max_{a \in \A} \; \lcb[h](s, a)\Big) \\
        \overset{\text{ \cref{eq:s_h^t}}}&{\leq} 
\min_{\tinset} \Big(\max_{a \in \A} \ucb[h](s_{h}^t, a) - \max_{a \in \A} \; \lcb[h](s_{h}^t, a)\Big) \\
        \overset{\text{\cref{eq:a_h^t}}} & {\leq}  
        \min_{\tinset}\Big( \ucb[h](s_{h}^t, a_h^t) -  \; \lcb[h](s_{h}^t, a_h^t)\Big) \\
        & \leq \frac{1}{T}\sum_{t=1}^T \Big( \ucb[h](s_{h}^t, a_h^t) -  \; \lcb[h](s_{h}^t, a_h^t)\Big). \label{eq:final_Q_star}
\end{align}

Next, for convenience we introduce the notation
\begin{equation}\label{eq:def_dth_notation}
    d^t_{h}:=
    \Big( \ucb[h](s_{h}^t,a_{h}^t) - T^*_{h} \ucb[h+1] (s_{h}^t,a_{h}^t) \Big)
    + \Big( T^*_{h} \lcb[h+1] (s_{h}^t,a_{h}^t) -  \lcb[h] (s_{h}^t,a_{h}^t) \Big),
\end{equation}
and obtain the following upper bound on $\ucb[h](s_{h}^t, a_h^t) -  \; \lcb[h](s_{h}^t, a_h^t)$ (from \cref{eq:final_Q_star}) for every $\hinset$, $\tinset$: 

\begin{align}
        \ucb(s_{h}^t,a_{h}^t) -  \lcb (s_{h}^t, a_{h}^t)
\overset{\text{ \cref{eq:def_dth_notation}}} & {=}
        \label{eq_proof:lemmaboundingucbminuslcb_eq_4}
        d_{h}^t + T^*_{h} \ucb[h+1] (s_{h}^t,a_{h}^t) - T^*_{h} \lcb[h+1] (s_{h}^t,a_{h}^t)
    \\ \overset{ \text{Def. of } T^*_{h}  } & {=}
        d_{h}^t
        +\Esaposh[s_{h}^t,a_{h}^t] \Big(
            \max_{\overline{a} \in \A} \ucb[h+1] (s',\overline{a}) 
            - \max_{\underline{a} \in \A} \lcb[h+1] (s',\underline{a}) 
            \Big)
    \\& \leq
        \label{eq_proof:lemmaboundingucbminuslcb_eq_2}
        d_{h}^t
        + \max_{\saposinset} \Big(
            \max_{\overline{a} \in \A} \ucb[h+1] (s',\overline{a}) 
            - \max_{\underline{a} \in \A} \lcb[h+1] (s',\underline{a}) 
            \Big)
    \\ \overset{ \text{\cref{eq:s_h^t}}} & {=}
        \label{eq_proof:lemmaboundingucbminuslcb_eq_3}
        d_{h}^t
        + \Big(
            \max_{\overline{a} \in \A} \ucb[h+1] (s_{h+1}^t,\overline{a}) 
            - \max_{\underline{a} \in \A} \lcb[h+1] (s_{h+1}^t,\underline{a}) 
            \Big)
    \\ \overset{ \text{\cref{eq:a_h^t}}} & {\leq}
        \label{eq_proof:lemmaboundingucbminuslcb_eq_1}
        d_{h}^t
        + \Big(
             \ucb[h+1] (s_{h+1}^t,a_{h+1}^t) 
            -  \lcb[h+1] (s_{h+1}^t,a_{h+1}^t) 
            \Big).
\end{align}

Using the definition of $\lcb[H+1]$ and $\ucb[H+1]$ as the zero functions, we can unroll the recursive inequality from \cref{eq_proof:lemmaboundingucbminuslcb_eq_1} and upper bound $\ucb[h](s_{h}^t, a_h^t) -  \; \lcb[h](s_{h}^t, a_h^t)$ for every $\hinset$, $\tinset$ as follows:
\begin{align}
	 \ucb(s_{h}^t,a_{h}^t) -  \lcb (s_{h}^t, a_{h}^t)
	 \leq \sum_{h'=h}^Hd_{h'}^t.
\end{align}\begin{align}
	 &\ucb(s_{h}^t,a_{h}^t) -  \lcb (s_{h}^t, a_{h}^t)\\
	 &\leq
        \sum_{h'=h}^H \bigg[
            \Big( \ucb[h'](s_{h'}^t,a_{h'}^t) - T^*_{h'} \ucb[h'+1] (s_{h'}^t,a_{h'}^t) \Big)
            + \Big( T^*_{h'} \lcb[h'+1] (s_{h'}^t,a_{h'}^t) -  \lcb[h'] (s_{h'}^t,a_{h'}^t) \Big)
        \bigg]
    \\ \overset{\text{\Cref{asm:confidence_assumption}}}&{\quad\leq}
        \sum_{h'=h}^H  4 \beta
        \sigma_{h'}^t(s_{h'}^t,a_{h'}^t).\label{eq:q_bar_minus_q_ubar_final}
\end{align}

By substituting the bound from \cref{eq:q_bar_minus_q_ubar_final} in \cref{eq:final_Q_star}, and then in \cref{eq:regret_decomposition_last}, we arrive at:
\begin{equation} \label{eq:final_in_the_proof}
   V_1^{*} (s) - V_1^{\bestpolicy} (s) \leq 4 \beta \sum_{h=1}^H \sum_{h'=h}^H \frac{1}{T} \sum_{t=1}^T \sigma_{h'}^t(s_{h'}^t,a_{h'}^t) \\
   \leq  2 \sqrt{3} \beta H(H+1)
                \sqrt{\tfrac{ \Gamma_k(T, \lambda) }{T}}, 
\end{equation}
where the last inequality follows from \cref{lemma:bound_for_sum_of_posterior_sd}. Since \cref{eq:final_in_the_proof} holds for any $s \in S$, we arrive at our main result:
\begin{equation}
   \| V_1^{*}  - V_1^{\bestpolicy}  \|_{\ell^{\infty}(\S)} \leq  2 \sqrt{3} \beta H(H+1)
                \sqrt{\tfrac{ \Gamma_k(T, \lambda) }{T}}.
\end{equation}

\subsection{Offline contextual Bayesian optimization}
\label{sec:offline_BO_app}
    \begin{algorithm}[H]
        \caption{\algnm for offline contextual Bayesian optimization}
        \label{alg:algo_bo}
        \begin{algorithmic}[1] \Require 
kernel function $k(\cdot, \cdot)$, exploration parameter $\beta>0$, regularization parameter $\lambda$, 
            \For {$t=1,\dots,T$}

\State Obtain $\qbart$ and $\qubart$ from 
                     \Cref{eq:def_ucb_lcb}

\State Choose $s^t \in \argmax_{s\in S} \Big[\max_{a \in A}\qbart(s,a) - 
                \max_{a \in A} \qubart (s,a)\Big]$
                    \State Choose $a^t \in \argmax_{a \in \A} \qbart(s^t,a)$
                    \State Observe the reward $y_t = Q^*(s^t, a^t) + \eta_t$ 
                    \EndFor
\State Output the policy estimate $\hat{\pi}_{T}$ such that\; $\hat{\pi}_{T} (\cdot) = \argmax_{a \in \A} \max_{t \in [T]}\; \qubart (s,a)$
        \end{algorithmic}   
    \end{algorithm}

Our algorithm for the offline contextual Bayesian optimization is presented in \Cref{alg:algo_bo}. As a side observation, we note that similarly to \cite{char2019offline}, we can also simply incorporate  context weights (i.e., given $\omega(s)$ that represents some weighting of context $s$ that may depend on the probability of seeing $s$ at evaluation time or the importance of $s$), in case they are available, into the proposed acquisition function, i.e., 
\begin{equation}
    s^t \in \argmax_{s\in S} \Big[ \big(\max_{a \in A}\qbart(s,a) - 
                \max_{a \in A} \qubart (s,a)\big)w(s) \Big].
\end{equation}

\subsubsection{Proof of \Cref{corr:bo}}

    \begin{proof}
        In this proof, we condition on the event in \cref{eq:lcb_leq_qstar_leq_ucb_in_BO_corollary} holding true. Similar arguments to the ones in \cref{eq_proof:lemmaboundingucbminuslcb_eq_4} – \cref{eq_proof:lemmaboundingucbminuslcb_eq_1} lead to the following for every $s \in \S$:
            
        \begin{align}
                \max_{\ainset} Q^* (s, a) - Q^*(s,\bestpolicy (s)) 
             \overset{\text{\cref{eq:lcb_leq_qstar_leq_ucb_in_BO_corollary}}}&{\leq}
                \max_{\ainset} Q^* (s, a) -   \max_{t \in [T]}\; \lcb[](s, \bestpolicy(s))  \\ 
            \overset{ \text{Def. of } \bestpolicy} & {=} 
                 \max_{\ainset} Q^* (s, a)  - \max_{a \in \A} \max_{t \in [T]}\; \lcb[](s, a) \\
            &= 
                \min_{\tinset} \Big(\max_{\ainset} Q^* (s, a)  - \max_{a \in \A} \; \lcb[](s, a)\Big) \\
            \overset{\text{\cref{eq:lcb_leq_qstar_leq_ucb_in_BO_corollary}}} & {\leq}
                \min_{\tinset} \Big(\max_{a \in \A} \ucb[](s, a) - \max_{a \in \A} \; \lcb[](s, a)\Big) \\
            \overset{ \text{Def. of } s^t}&{\leq} 
                 \min_{\tinset} \Big(\max_{a \in \A} \ucb[](s^t, a) - \max_{a \in \A} \; \lcb[](s^t, a)\Big) \\
            \overset{\text{Def. of } a^t} & {\leq}  
                \min_{\tinset}\Big( \ucb[](s^t, a^t) -  \; \lcb[](s^t, a^t)\Big) \\
            & \leq 
                \frac{1}{T}\sum_{t=1}^T \Big( \ucb[](s^t, a^t) -  \; \lcb[](s^t, a^t)\Big)  \\
            \overset{\text{\cref{eq:def_ucb_lcb}}}&{=} 
                \frac{1}{T}\sum_{t=1}^T \Big( 2 \beta_t \sigma^{t}(s^{t}, a^{t})\Big) \\
            &\leq  \frac{2\beta_T}{T}\sum_{t=1}^T \sigma^{t}(s^{t}, a^{t}) \\
            \overset{\text{\Cref{lemma:bound_for_sum_of_posterior_sd}}}&{\leq}  
                \tfrac{2\beta_T \sqrt{3 \Gamma_k(T, \lambda)}}{\sqrt{T}}.
        \end{align}
Finally, by setting $ \epsilon \geq \tfrac{2\beta_T \sqrt{3 \Gamma_k(T, \lambda)}}{\sqrt{T}}$ and expressing it in terms of $T$, we arrive at the main result.
    \end{proof}

\section{Additional Experimental Details}
\subsection{Implementation}
\label{a:implementation}
We use an exact Gaussian Process with a squared exponential kernel 
with learned scale parameters in each dimension
for the value function regression in \cref{eq:Q_mean_estimate}.
We fit the kernel hyperparameters at each iteration using 1000 iterations of Adam \citep{kingma2014adam}, maximizing the marginal log likelihood of the training data.
We used the TinyGP package \citep{tinygp} built on top of JAX \citep{jax2018github} in order to take advantage of JIT compilation. 
All experiments are conducted with a fixed bonus $\beta = 0.5$. We have empirically evaluated various settings of $\beta$ in \Cref{a:beta_search}.
We uniformly sample 1,000 points from the state space and evaluate them to find an approximate maximizer to the objective in \cref{eq:s_h^t}.

\paragraph{DDQN and BDQN.} For both of these methods we use networks with two hidden layers, each with 256 units. For the bootstrapped DQN, we use a network with 10 different heads, each representing a different $Q$ function. For each step collected during exploration, a corresponding mask is generated and added to the replay buffer that signifies which heads will train on this sample. Each $Q$ function has a probability of $0.5$ of being trained on each transition.

\subsection{Environments}
\label{a:envs}
Each environment is defined with a native reward function taken from the literature. We established upper and lower bounds on the reward function value and used them to scale the reward function values to $[0,1]$ so that our environments would match the theoretical results in this paper.
\paragraph{Cartpole}
We use a modified version of the cartpole environment from \citet{mehta2022exploration} that has dense rewards as implemented in \citet{wang2019benchmarking}. The state space is $4D$ and consists of the horizontal position and velocity of the cart as well as the angular position and velocity of the pole. $p_0$ in this environment is a normal distribution centered with the cart below the goal horizontally with the pole hanging down with very small variance. $p'_0$ is the same distribution displaced 5 meters to the right.
\paragraph{Navigation}
This is a $2D$ navigation problem with dynamics of the form $s_{t + 1} = s_t + B(s_t) a_t$, where $B(t) = \begin{bmatrix}\sin(x_2 / 10) + 4 & 0 \\0 & 1.5\cos(x_1 / 10) - 2\end{bmatrix}$. The goal is fixed at $\begin{bmatrix}6\\9\end{bmatrix}$. We define $p_0$ to be the uniform distribution over the axis-aligned rectangle given by corners $\begin{bmatrix}-8\\-9\end{bmatrix}$ and $\begin{bmatrix} -6\\-6\end{bmatrix}$. We define $p'_0$ to be the uniform distribution over the axis-aligned rectangle given by corners $\begin{bmatrix}1\\4\end{bmatrix}$ and $\begin{bmatrix}3\\7\end{bmatrix}$. The reward function at every timestep is simply the negative $\ell_1$-distance between the agent and the goal.

\paragraph{$\beta$ Tracking and $
\beta$ + Rotation}

Our two simulated plasma control problems are taken from \citet{mehta2022exploration}, which gives a thorough description of their relevance to the problem of nuclear fusion.
At a high level, $\beta_N$ is a normalized plasma pressure ratio that is correlated with the economic output of a fusion reactor. Our \textbf{$\beta$ Tracking} environment aims to adjust the injected power in the reactions in order to achieve a target value of $\beta_N = 2\%$.
The initial state distribution $p_0$ is taken from a set of real datapoints from shots on the DIII-D tokamak in San Diego. 
Our alternate initial state distribution $p'_0$ consists of simply adding $0.4$ to each component of a vector sampled from $p_0$. The reward function is the negative $\ell_1$-distance between the $\beta_N$ value and $2\%$. The dynamics are given by a learned model of the plasma state as introduced in \citet{char2022offline}. 

The \textbf{$\beta$ + Rotation} environment is a more complex plasma control problem, introducing an additional actuator (injected torque) and an additional control objective (controlling plasma rotation).
Control of plasma rotation is key to plasma stability and this is a reduced version of the realistic problem.
This environment also uses a model from \citet{char2022offline} for the dynamics, real plasma states for the initial state distribution $p_0$, and a fixed translation for the alternate initial state distribution $p'_0$.  We also include a randomly drawn target for $\beta_N$ and rotation in the state space for every episode.

\subsection{Exploring $\beta$ values}
\label{a:beta_search}

In the main paper, we report experiments with the exploration parameter $\beta = 0.5$ for all $t$. In this work, we do not explore principled methods of choosing $\beta$ and welcome future work in the area. In lieu of this, we provide an empirical analysis of the sensitivity of \algnm to varying settings of $\beta$. We ran the \algnm method on our evaluation environments as in the experiments in Table \ref{tab:shifted_results}, where we allowed each method to collect 1,000 timesteps of data and evaluated the identified policies on the environments starting from a evaluation initial distribution $p'_0$ distinct from the initial distribution $p_0$.
In Table \ref{tab:beta_results} we observe that lower values of $\beta$ perform better because the confidence bounds seem too wide at higher settings, where the performance becomes similar to that of uncertainty sampling. Therefore, we recommend initially trying $\beta$-values around $0.2-0.5$ when applying \algnm.

\begin{table}[t]
\centering
    \begin{tabular}{l|cccc}
    \toprule
    Environment & $\beta = 0.25$ & $\beta = 0.5$ & $\beta = 1$ & $\beta = 2$\\
    \midrule
    Cartpole & $\mathbf{17.2 \pm 0.3}$ & $16.8 \pm 0.4$& $16.3 \pm 0.5$ & $15.1 \pm 0.4$\\
    Navigation &$\mathbf{22.3 \pm 0.5}$ & $\mathbf{22.3 \pm 0.4}$& $\mathbf{22.2 \pm 1.4}$& $20.6 \pm 1.3$\\
    $\beta$ Tracking & $\mathbf{13.9 \pm 0.3}$ & $\mathbf{14.0 \pm 0.4}$ & $13.2 \pm 1.3$  & $13.4 \pm 0.7$\\
    $\beta$ + Rotation & $\mathbf{14.8 \pm 0.3}$ & $14.3 \pm 0.2$ & $14.1 \pm 0.9$ & $13.3 \pm 0.9$\\

        \bottomrule
    \end{tabular}
\vspace{-3mm}
\caption{Average Return $\pm$ standard error of executing the identified best policy on the MDP starting from $p'_0$ over 5 seeds after collecting 1000 timesteps of data using the \algnm method with varying values of the exploration parameter $\beta$.}
\label{tab:beta_results}
\vspace{-3mm}
\end{table}
 \end{document}